\documentclass{article}
\usepackage[final,nonatbib]{neurips_2020}
\usepackage[utf8]{inputenc} 
\usepackage[T1]{fontenc}
\usepackage[backend=biber, bibstyle=ieee, citestyle=numeric-comp, doi=false, isbn=false, url=false]{biblatex}
\usepackage{microtype}

\usepackage{hyperref}
\hypersetup{
	bookmarks=true,
	colorlinks,
	linkcolor=blue,
	citecolor=blue,
	urlcolor=blue,
}
\usepackage{mathtools}
\usepackage{amssymb}
\usepackage{mathrsfs}
\usepackage{amsthm}
\usepackage[capitalize]{cleveref}
\usepackage{bm}

\DeclareMathOperator{\E}{\mathbf{E}}
\renewcommand{\P}{\operatorname{\mathbf{P}}}

\newcommand{\tr}{\operatorname{tr}}
\newcommand{\argmin}{\operatornamewithlimits{arg~min}}
\newcommand{\<}{\langle}
\renewcommand{\>}{\rangle}

\DeclarePairedDelimiter{\norm}{\lVert}{\rVert}
\DeclarePairedDelimiter{\abs}{\lvert}{\rvert}
\DeclarePairedDelimiter{\floor}{\lfloor}{\rfloor}

\DeclarePairedDelimiter{\braces}{\{}{\}}

\DeclarePairedDelimiterX{\ip}[2]{\langle}{\rangle}{#1,#2}

\newcommand{\projP}{\operatorname{\mathcal{P}}}

\newcommand{\opI}{\operatorname{\mathcal{I}}}

\newcommand{\opT}{\operatorname{\mathcal{T}}}
\newcommand{\bolda}{\boldsymbol{a}}

\newcommand{\boldY}{\boldsymbol{Y}}
\newcommand{\boldI}{\boldsymbol{I}}
\newcommand{\boldK}{\boldsymbol{K}}

\newcommand{\R}{\mathbf{R}}

\newcommand{\vol}{\operatorname{vol}}

\newcommand{\divergence}{\operatorname{div}}

\newcommand{\spn}{\operatorname{span}}

\newcommand{\scrH}{\mathcal{H}}
\newcommand{\scrM}{\mathcal{M}}

\newcommand{\fhat}{\hat{f}}

\newcommand{\Vtilde}{\widetilde{V}}

\newcommand{\utilde}{\tilde{u}}

\newtheorem{lemma}{Lemma}
\newtheorem{theorem}{Theorem}

\newtheorem{proposition}{Proposition}
\theoremstyle{definition}

\newtheorem{assumption}{Assumption}
\Crefname{assumption}{Assumption}{Assumptions}

\addbibresource{./refs.bib}

\author{%
	Andrew D.~McRae \quad Justin Romberg \quad Mark A.\ Davenport\\
	School of Electrical and Computer Engineering\\
	Georgia Institute of Technology\\
	Atlanta, GA 30332 \\
	\texttt{admcrae@gatech.edu}, \texttt{jrom@ece.gatech.edu}, \texttt{mdav@gatech.edu}%
}
\title{Sample complexity and effective dimension for regression on manifolds}

\newcommand{\kheat}{k^{\mathrm{h}}_t}
\newcommand{\kbl}{k^{\mathrm{bl}}_\Omega}
\newcommand{\Hheat}{\scrH^{\mathrm{h}}_t}
\newcommand{\Hbl}{\scrH^{\mathrm{bl}}_\Omega}

\begin{document}
\maketitle
\begin{abstract} 
We consider the theory of regression on a manifold using reproducing kernel Hilbert space methods. Manifold models arise in a wide variety of modern machine learning problems, and our goal is to help understand the effectiveness of various implicit and explicit dimensionality-reduction methods that exploit manifold structure. Our first key contribution is to establish a novel nonasymptotic version of the Weyl law from differential geometry. From this we are able to show that certain spaces of smooth functions on a manifold are effectively finite-dimensional, with a complexity that scales according to the manifold dimension rather than any ambient data dimension. Finally, we show that given (potentially noisy) function values taken uniformly at random over a manifold, a kernel regression estimator (derived from the spectral decomposition of the manifold) yields minimax-optimal error bounds that are controlled by the effective dimension.
\end{abstract}

\section{Introduction}
High-dimensional data is ubiquitous in modern machine learning. 
Examples include images (2-D and 3-D), document texts, DNA, and neural recordings.
In many cases, the number of dimensions in the data is much larger than the number of actual data samples.
Traditional statistical methods cannot handle such cases,
so researchers have turned to a variety of explicit dimensionality-reduction techniques---which make inference more tractable---and to tools such as neural networks that often implicitly transform the data into a much lower-dimensional feature space.
These techniques inherently assume that the data have an \emph{intrinsic} dimension that is much lower than that of the data's original representation.
Our goal in this paper is to show that the difficulty of a supervised learning problem depends only on this intrinsic dimension and not on the (potentially much larger) ambient dimension.
In particular, we consider the common assumption that the data lie on a low-dimensional \emph{manifold} embedded in Euclidean space (see \parencite{Donoho2005,Peyre2009,Zhu2018,Ganguli2012} for some of the many example applications).

As an illustration of the kind of results we hope to obtain,  we first consider a simple example: a function on the circle $S^1$ (or, equivalently, a periodic function on the real line).
Specifically, suppose that we want to estimate a function $f^*$ on the circle from random samples.
In general, it is intractable to estimate an arbitrary function from finitely many samples,
but it becomes possible if we assume $f^*$ is structured. 
For example, $f^*$ may exhibit a degree of smoothness, which can be readily characterized via the \emph{Fourier series} for $f^*$. Specifically, recall that we can write $f^*$ as the Fourier series sum $f^*(x) = a_0 + \sum_{\ell \geq 1} (a_\ell \cos(2 \pi \ell x) + b_\ell \sin(2 \pi \ell x))$. One common notion of smoothness in signal processing is that $f^*$ is \emph{bandlimited}, meaning that this sum can be truncated at some largest frequency $\Omega$.
In this case, $f^*$ lies in a subspace of dimension at most $p(\Omega) = 2\floor{\Omega/2\pi} + 1$.
We know (see, e.g., \cite[Chapter 12]{Foucart2013} or \cite{Cohen2013}) that we can recover such a function exactly, with high probability, from $n \gtrsim p(\Omega) \log p(\Omega)$ samples placed uniformly at random.
If there is measurement noise, the squared $L_2$ error due to noise scales like $\frac{p(\Omega)}{n} \sigma^2$.
In higher dimensions (say, on the torus $T^m$), an $\Omega$-bandlimited function lies in a space of dimension $p(\Omega) = O(\Omega^m)$,
and the number of random samples required scales accordingly.

Another model for smoothness is that $f^*$, rather than being bandlimited, has exponentially-decaying frequency components.
For example, suppose the Fourier coefficients satisfy $\sum_{\ell} e^{t \ell^2} (a_\ell^2 + b_\ell^2) < \infty$ for some $t > 0$
(this is roughly equivalent to $f^*$ being the convolution of a Gaussian function with an arbitrary function in $L_2$).
The space of such functions is infinite-dimensional,
but any function in it can be approximated as $\Omega$-bandlimited to within an error of size $O(e^{-c \Omega^2 t})$, which should enable us to recover a close approximation to $f^*$ from $O(p(\Omega) \log p(\Omega))$ samples.

In this paper, we provide precise analogs of these sample complexity results in the general case of a function on an arbitrary manifold $\scrM$ with dimension $m$.
As on the circle or torus, an $L_2$ function $f(x)$ on a Riemannian manifold has a \emph{spectral decomposition} into modes $u_\ell(x)$ corresponding to vibrational frequencies $\omega_\ell$ for all non-negative integers $\ell$; these modes are the eigenfunctions of the Laplace-Beltrami operator on $\scrM$.
Our first key contribution (described in \Cref{thm:pointwise_weyl}) is a nonasymptotic version of the \emph{Weyl law} from differential geometry:
this states that, for large enough $\Omega$, the set $\Hbl$ of $\Omega$-bandlimited functions on $\scrM$
(functions composed of modes with frequencies below $\Omega$) has dimension $\dim(\Hbl) \leq C_m \vol(\scrM)\, \Omega^m \eqqcolon p(\Omega)$.
Thus the number of degrees of freedom scales according to the \emph{manifold dimension} $m$ rather than a larger ambient dimension.

Our second key contribution is an error bound for recovering functions on $\scrM$ from randomly-placed samples using kernel regression.
We show in \Cref{thm:manifold_sinc} that if we take $n \gtrsim p(\Omega) \log p(\Omega)$ samples of $f^*$,
we can recover any $\Omega$-bandlimited function with error
\[
	\frac{\norm{\fhat - f^*}_{L_2}^2}{\vol(\scrM)} \lesssim \frac{p(\Omega)}{n} \sigma^2,
\]
which is precisely the error rate for parametric regression in a $D(\Omega)$-dimensional space. Our results extend further to approximately-bandlimited functions:
for example, if $f^*$ satisfies 
$\sum_\ell a_\ell^2 e^{t \omega_\ell^2 } < \infty$, where $f^* = \sum_\ell a_\ell u_\ell$,
then, again with $n \gtrsim p(\Omega) \log p(\Omega)$ samples, we get (\Cref{thm:manifold_heat})
\[
	\frac{\norm{\fhat - f^*}_{L_2}^2}{\vol(\scrM)} \lesssim \frac{p(\Omega)}{n} \sigma^2 + O(e^{-c \Omega^2 t}).
\]
Both bounds are minimax optimal in the presence of noise.

These results follow from our \Cref{thm:rkhs_main}, which is a more general result on regression in a reproducing kernel Hilbert space.  \Cref{thm:manifold_sinc,thm:manifold_heat} adapt this result to a specific choice of kernel. 

The paper is organized as follows.
Sections~\ref{sec:framework} and~\ref{sec:related_work} describe our framework, survey the relevant literature, and compare it to our results.
\Cref{sec:main_theory} contains our main theoretical results.
The proofs are in the appendices in the supplementary material.
The key technical results are \Cref{thm:rkhs_main}, which is proved via empirical risk minimization and operator concentration inequalities,
and \Cref{lem:heat_diag_upper} (used to prove \Cref{thm:pointwise_weyl}), which is proved via heat kernel comparison results on manifolds of bounded curvature.

\section{Framework and notation}
\label{sec:framework}
\subsection{Kernel regression and interpolation}
\label{sec:rkhs_intro}
\label{sec:kernel_intro}
Kernels provide a convenient and popular framework for nonparametric function estimation.
They allow us to treat the evaluation of a nonlinear function as a \emph{linear} operator on a Hilbert space,
and they give us a computationally feasible way to estimate such a function (which is often in an infinite-dimensional space) from a finite set of samples.
Here, we review some of the key ideas that we will need in analyzing kernel methods.

Let $S$ be an arbitrary set, and suppose $k \colon S \times S \to \R$ is a positive definite kernel.
Let $\scrH$ be its associated reproducing kernel Hilbert space (RKHS),
characterized by the identity $f(x) = \< f, k(\cdot, x)\>_{\scrH}$ for all $f \in \scrH$ and $x \in S$. 

Now, suppose we have $X_1,\dots, X_n \in S$,
$f^* \in \scrH$ is an unknown function,
and we observe $Y_i = f^*(X_i) + \xi_i$ for $i = 1, \ldots, n$,
where the $\xi_i$'s represent noise.
A common estimator for $f^*$ is the regularized empirical risk minimizer
\begin{equation}
    \label{eq:erm_opt}
	\fhat = \argmin_{f \in \scrH}~\frac{1}{n} \sum_{i=1}^n (Y_i - f(X_i))^2 + \alpha \norm{f}_\scrH^2,
\end{equation}
where $\alpha \geq 0$ is a regularization parameter.
The solution to the optimization problem \eqref{eq:erm_opt} is
\begin{equation}
	\label{eq:kernel_estimator}
	\fhat(x) = \sum_{i=1}^n a_i k(x, X_i),
\end{equation}
where $\bolda = (a_1, \dots, a_n) \in \R^n$ is given by
\[
	\bolda = (n\alpha \boldI_n + \boldK)^{-1} \boldY,
\]
where $\boldY = (Y_1, \dots, Y_n) \in \R^n$, $\boldK$ is the kernel matrix on $X_1, \dots, X_n$ defined by $\boldK_{ij} = k(X_i, X_j)$,
and $\boldI_n$ is the $n \times n$ identity matrix.

In general, $\fhat$ corresponds to a \emph{ridge regression} estimate of $f^*$. The limiting case $\alpha = 0$ can be recast as the problem
\[
\fhat = \argmin_{f \in \scrH}~\norm{f}_\scrH~\mathrm{s.t.}~Y_i = f(X_i), \ i = 1, \dots, n.
\]
In this case, if the $X_i$'s are distinct, then $\fhat$ interpolates the measured values of $f^*$.

\subsection{Kernel integral operator and eigenvalue decomposition}
\label{sec:kernel_eig}
A common tool for analyzing kernel interpolation and regression, which will play a central role in our analysis in \Cref{sec:main_theory}, is the eigenvalue decomposition of a kernel's associated integral operator.  The integral operator $\opT$ is defined for functions $f$ on $S$ by
\[
	(\opT(f))(x) = \int_S k(x, y) f(y) \ d\mu(y),
\]
where $\mu$ is a measure on $S$.
Under certain assumptions\footnote{E.g., $S$ is a compact metric space; $\mu$ is strictly positive, finite, and Borel; and $k$ is continuous~\cite{Steinwart2012}.} on $S$, $\mu$, and $k$,
$\opT$ is a well-defined operator on $L_2(S)$,
is compact and positive definite with respect to the $L_2$ inner product,
and has eigenvalue decomposition
\[
	\opT(f) = \sum_{\ell=1}^\infty t_\ell \<f, v_\ell\>_{L_2} v_\ell,\ f \in L_2(S),
\]
where the eigenvalues $\{t_\ell\}$ are arranged in decreasing order and converge to $0$,
and the eigenfunctions $\{v_\ell\}$ are an orthonormal basis for $L_2(S)$. We also have $k(x, y) = \sum_{\ell=1}^\infty t_\ell v_\ell(x) v_\ell(y)$,
where the convergence is uniform and in $L_2$.

This eigendecomposition plays an important role in characterizing the RKHS $\scrH$ associated with the kernel $k$. Combining this expression for $k$ with the identity $\<f, k(\cdot, x)\>_{\scrH} = f(x)$, we can derive the fact that, for all $f, g \in \scrH$,
\[
       \<f, g\>_{\scrH} = \sum_{\ell = 1}^\infty \frac{\<f, v_\ell\>_{L_2} \<g, v_\ell\>_{L_2}}{t_\ell}.
\]
This implies that $\<f, g\>_{L_2} = \<\opT^{1/2} (f), \opT^{1/2} (g)\>_{\scrH}$ for all $f, g \in L_2(S)$.
Thus $\opT^{1/2}$ is an isometry from $L_2(S)$ to $\scrH$, and so
for any $f \in \scrH$,
we can write $f = \opT^{1/2} (f_0)$, where $\norm{f_0}_{L_2} = \norm{f}_{\scrH}$.
This implies that, for any $p \geq 1$, the projection of $f$ onto $(\spn\braces{v_1, \dots, v_p})^\perp$ has $L_2$ norm at most $\sqrt{t_{p+1}} \norm{f}_{\scrH}$.
Hence the decay of the eigenvalues $\braces{t_\ell}$ of $\opT$ characterizes the ``effective dimension'' of $\scrH$ in $L_2$, which will be a fundamental building block for our analysis.

\subsection{Spectral decomposition of a manifold and related kernels}
We now turn to our specific problem of regression on a manifold, considering how an RKHS framework can help us.
The book \cite{Chavel1984} is an excellent reference for the material in this section.

A smooth, compact Riemannian manifold $\scrM$ (without boundary) can be analyzed via the \emph{spectral decomposition} of its Laplace-Beltrami operator $\Delta_\scrM$ (we will often call it the Laplacian for short).
This operator is defined as $\Delta_\scrM f \coloneqq -\divergence(\nabla f)$.
In $\R^m$, it is simply the operator $-\sum_{i=1}^m \frac{\partial^2}{\partial x_i^2}$.
The Laplacian can be diagonalized as
\[
	\Delta_\scrM f = \sum_{\ell = 0}^\infty \lambda_\ell \< f, u_\ell\>_{L_2} u_\ell,
\]
where $0 = \lambda_0 < \lambda_1 \leq \lambda_2 \leq \cdots$, the sequence $\lambda_\ell \to \infty$ as $\ell \to \infty$,
and $\{u_\ell \}$ is an orthonormal basis for $L_2(\scrM)$
(all integrals are with respect to the standard volume measure on $\scrM$).

The eigenvalues $\{ \lambda_\ell \}$ are the squared resonant frequencies of $\scrM$,
and the eigenfunctions $\{u_\ell \}$  are the vibrating modes,
since solutions to the wave equation $f_{tt} + \Delta_\scrM f = 0$ on $\scrM$ have the form
\[
	f(t, x) = \sum_{\ell=0}^\infty (a_\ell \sin\sqrt{\lambda_\ell} t + b_\ell \cos\sqrt{\lambda_\ell} t) u_\ell(x).
\]
The classical Weyl law (e.g., \cite[p.~9]{Chavel1984}) says that, if $\scrM$ has dimension $m$, then, asymptotically,
\[
	\abs{\{ \ell : \lambda_\ell \leq \lambda \}} \sim c_m \vol(\scrM) \lambda^{m/2}
\]
as $\lambda \to \infty$, where $c_m = (2 \pi)^{-m}\, V_m$, with $V_m$ denoting the volume of the unit ball in $\R^m$.

Using the spectral decomposition of the Laplacian, any number of kernels can be defined by
\[
	k(x, y) = \sum_{\ell = 0}^\infty g(\lambda_\ell)  u_\ell(x) u_\ell(y)
\]
for some function $g$.  With this construction, the integral operator of $k$ has eigenvalue decomposition $\opT(f) = \sum_{\ell \geq 0} g(\lambda_\ell) \<f, u_\ell\>_{L_2} u_\ell$,
hence, per \Cref{sec:kernel_eig}, $\norm{f}_{\scrH}^2 = \sum_{\ell \geq 0} \<f, u_\ell\>^2 / g(\lambda_\ell)$.

Our results could, in principle, apply to many kernels with the above form,
but we will primarily consider \emph{bandlimited kernels} and the \emph{heat kernel}.
The bandlimited kernel with bandlimit $\Omega > 0$ is
\[
	\kbl(x, y) = \sum_{\lambda_\ell \leq \Omega^2} u_\ell(x) u_\ell(y),
\]
which is the reproducing kernel of the space of bandlimited functions on $\scrM$:
\[
\Hbl =   \left\{ f \in L_2(\scrM) \colon f \in  \spn\{ u_\ell \colon \lambda_\ell \leq \Omega^2 \} \right\}
\]
with $\norm{f}_{\Hbl} = \norm{f}_{L_2}$ for $f \in \Hbl$. The heat kernel is a natural counterpart to the common Gaussian radial basis function on $\R^m$.
Detailed treatments can be found in~\cite{Chavel1984,Hsu2002}.
We will define it for $t > 0$ as
\[
	\kheat(x, y) = \sum_{\ell=0}^\infty e^{-\lambda_\ell t / 2} u_\ell(x) u_\ell(y).
\]
Its corresponding RKHS is
\[
	\Hheat = \left\{ f \in L_2(\scrM) \colon \norm{f}_{\Hheat}^2 = \sum_{\ell=0}^\infty e^{\lambda_\ell t / 2} \< f, u_\ell \>_{L_2}^2 < \infty \right\}.
\]
The heat kernel $\kheat$ gets its name from the fact that it is the fundamental solution to the heat equation $f_t + \frac{1}{2} \Delta_\scrM f = 0$  on $\scrM$. The heat kernel on $\R^m$ is $\kheat(x, y) = \frac{1}{(2 \pi t)^{m/2}} e^{-\norm{x-y}^2 / 2t}$.

\section{Related work}
\label{sec:related_work}

%


\subsection{Dimensionality reduction and low-dimensional structure}
There is an extensive literature on the use of low-dimensional manifold structure in machine learning.
Perhaps most prominently, nonlinear dimensionality-reduction techniques that exploit manifold structure have been developed, such as \parencite{Roweis2000,Tenenbaum2000,Belkin2003,Coifman2006,Maaten2008}.
More recently, there has been explicit inclusion of manifold models into neural network architectures \parencite{Monti2017,Masci2015,Boscaini2016,Shao2018}.
However, none of this research provides nonasymptotic performance guarantees.

On the other hand, the field of high-dimensional statistics provides many theoretical guarantees for low-dimensional data models.
For example, there are extensive bodies of theory for models such as sparsity \parencite{Donoho2006,Candes2006} and low-rank structure \parencite{Candes2009}.
One can view low-dimensional manifold models as a more powerful generalization of such structures.
One interesting work that bridges the gap between manifold models and high-dimensional statistics is \parencite{Baraniuk2009},
which is another explicit dimensionality-reduction technique.
Another similar line of work is the study of algebraic variety models (e.g., \parencite{Ongie2017}),
which are also nonlinear and low-dimensional.

While the great success of the many implicit and explicit dimensionality-reducing methods provides empirical evidence for the possibility of exploiting manifold structure,
there are still very large gaps in our theoretical understanding of when and why these methods can be effective.

\subsection{Manifold regression and kernels}
Regression on manifold domains has been explored in a number of previous works.
The closely-related problem of density estimation is considered in~\cite{Hendriks1990,Pelletier2005}.
Particularly relevant to our paper, \cite{Hendriks1990}~uses the same bandlimited kernel and heat kernel that we highlight
(and it analyzes the spectral decomposition of these kernels via the asymptotic Weyl law).
It is primarily interested in the power of the error rate that can be obtained by assuming the function (density) of interest has a certain number of derivatives;
in particular, it shows that $\norm{\fhat - f^*}_{L_2}^2 \lesssim n^{-2s/(m+2s)}$ if $f$ has $s$ bounded derivatives.
Both works, like ours, assume explicit knowledge of the manifold.

Perhaps more relevant to practical applications, \cite{Bickel2007}~seeks to provide a manifold-agnostic algorithm via local linear approximations to the data manifold;
however, it is also primarily interested in asymptotic error rates.
The paper \cite{Aswani2011} examines related methods asymptotically in more detail.
Another manifold-agnostic method similar in spirit to ours is that of \cite{Hamm2020}, who consider kernel estimation with (Euclidean) Gaussian radial basis functions.
They obtain the optimal $n^{-2s/(m+2s)}$ rate for $s$-smooth regression functions;
however, their assumptions are quite different from ours in that their regression functions must have \emph{smooth extensions} to (a neighborhood in) the embedding space.
Similarly, \cite{Chen2019a} obtain the optimal rate for functions that are $s$-smooth (in the manifold calculus, similarly to our assumptions) using a neural-network--type architecture.
However, they implicitly assume that the manifold is $C^{\infty}$-embedded in Euclidean space.

In \cite{Guhaniyogi2016,Calandra2016}, the authors explore Gaussian process models (which are closely related to kernel methods) on a manifold.

The error rate $\norm{\fhat - f^*}_{L_2}^2 \lesssim n^{-2s/(m+2s)}$	
is standard (and minimax optimal) in nonparametric statistics.
However, our function model and results are quite different in nature.
The regression functions we consider are \emph{infinitely} smooth,
and we show that the estimation of these functions is much like a \emph{finite-dimensional} regression problem;
not only do we get an $n^{-1}$ error rate (as we do when we take $s \to \infty$ above),
but the constant in front of this rate and the minimum number of samples needed are proportional to the finite effective dimension.


Finally, we also note that the idea of using a kernel that can be expressed in terms of the spectral decomposition of a manifold's Laplacian also has precedent.
In addition to \cite{Hendriks1990}, the paper \cite{Dyn1999} suggests using such kernels for interpolation in Sobolev spaces on a manifold.


\subsection{General kernel interpolation and regression}
Regression is a strict superset of interpolation;
interpolation typically assumes that we sample function values exactly (i.e., there is no noise),
while regression allows for (and often assumes) noise.

There is a substantial literature on the use of a kernel for interpolation of functions in an RKHS
(often, in this literature, referred to as the ``native space'' of the kernel).
A fairly comprehensive survey can be found in \cite{Wendland2005}.
Distinct from our work, most of this literature considers \emph{deterministic} samples of the function of interest.
Given (deterministic) sample locations $\{X_1, \dots, X_n\} \subset S$,
results in this literature tend to have the form
$\norm{\fhat - f^*}_\infty \leq g(h_X) \norm{f^*}_\scrH$,
where $h_X = \max_{x \in S} \min_{i \in \{1, \dots, n\}} d(x, X_i)$,
and $g(h)$ is a function that decreases to $0$ as $h \to 0$ at a rate that depends on the properties of the kernel $k$ (typically as a power or exponentially).
Some recent work applying kernel interpolation theory to manifolds is \cite{Hangelbroek2010,Hangelbroek2011,Hangelbroek2012}.

Much of the literature on (noisy) RKHS regression primarily considers the case when the eigenvalues of the integral operator (described in \Cref{sec:rkhs_intro}) decay as $t_\ell \lesssim \ell^{-b}$.
In \cite{Caponetto2007,Steinwart2009,Mendelson2010}, it is shown that the minimax optimal error rate is $\norm{f^* - \fhat}_{L_2}^2 \lesssim n^{-b/(b+1)}$.
Many other recent papers have explored this rate of convergence in a variety of settings \cite{Blanchard2018,Blanchard2020,Lin2020,Fischer2017}.
Several of these include more general spectral regularization algorithms, suggested by \cite{Bauer2007}.
Some interesting recent extensions consider a variety of algorithms that may be more practical for large-data situations.
These include iterative methods \cite{Blanchard2010,Dieuleveut2017,Dieuleveut2016}
and distributed algorithms \cite{Zhang2015,Lin2017,Guo2017}.

Another set of results (which are the most similar to ours) uses a regularized effective dimension $p_\alpha = \sum_\ell \frac{t_\ell}{\alpha + t_\ell}$,
where $\alpha$ is the regularization parameter.
This is considered in \cite{Zhang2005} and greatly refined in \cite{Hsu2014}.
Variations on these results can be found in \cite{Dicker2017}.
See \Cref{sec:rkhs_thm} for further discussion and comparison to our results.
The earlier report \cite{Dicker2015} resembles our work in its analysis of truncated operators.
We note that in the case of power-law eigenvalue decay, these results (and ours) recover the $n^{-b/(b+1)}$ error rate.

It is interesting to note that the squared error rate $n^{-b/(b+1)}$ can recover the standard rate for regression of $s$-smooth functions on manifolds.
The Sobolev space of order $s$ is the RKHS of the kernel $\sum_\ell (1 + \lambda_\ell)^{-s} u_\ell(x) u_\ell(y)$.
By the Weyl law, its eigenvalues decay according to $t_\ell \approx \ell^{-2s/m}$; plugging $2s/m$ in for $b$ recovers the standard rate $n^{-2s/(m+2s)}$.


\section{Main theoretical results}
\label{sec:main_theory}
\subsection{Dimensionality in RKHS regression}
\label{sec:rkhs_thm}
Here we present our main results for general regression and interpolation in an RKHS.
Our results also apply to the slightly more general setting of learning in an arbitrary Hilbert space (see, e.g., \cite{Hsu2014}),
but we do not explore this here.
We continue to use the notation established in \Cref{sec:kernel_intro,sec:kernel_eig}, and we further assume that $\mu(S) = 1$
(since $\mu$ is finite, we can always obtain this by a rescaling).
We assume that the function samples we take are uniformly distributed on $S$:
\begin{assumption}
    \label{assump:samp_dist}
    The sample locations $X_1, \dots, X_n$ are i.i.d.\ according to $\mu$.
\end{assumption}

Since $\scrH$ is, in general, infinite-dimensional, there is typically no hope of recovering an arbitrary $f^* \in \scrH$ to within a small error in $\scrH$-norm from a finite number of measurements.
However, the discussion in \Cref{sec:kernel_eig} suggests a more feasible goal.
Since any set of functions bounded in $\scrH$-norm can be approximated within an arbitrarily small $L_2$ error 
in a finite-dimensional subspace of $L_2$,
as long as the number of measurements is proportional to this loosely-defined 
``effective dimension'' of $\scrH$,
we have hope of recovering $f^*$ accurately in an $L_2$ sense.

Let $p > 0$ be a fixed integer dimension.
Let $G = \spn\{v_1, \dots, v_p\} \subset \scrH \cap L_2(S)$,
and let $G^\perp$ be its orthogonal complement in $L_2(S)$ and $\scrH$.
We denote by $\opT_G$ and $\opT_{G^\perp}$ the restrictions of $\opT$ onto $G$ and $G^\perp$, respectively.
We make the following assumptions on the eigenvalues and eigenfunctions of $\opT$:
\begin{assumption}
	\label{assump:eigfunc_bounds}
	For some constants $K_p$ and $R_p$, we have $\sum_{\ell=1}^p v_\ell^2(x) \leq K_p$
	and $\sum_{\ell=p+1}^\infty t_\ell v_\ell^2(x) \leq R_p$
	for almost every $x \in S$.
\end{assumption}
This says that the energy of the eigenfunctions of $\opT$ is reasonably spread out over the domain $S$---for the basis $\braces{v_1, \dots, v_p}$, this is a type of incoherence assumption.
If the eigenfunctions are well-behaved, we can expect $K_p \approx p$ and $R_p \approx \tr \opT_{G^\perp}$.
This holds in our original example of the Fourier series on the circle, since the sinusoid basis functions are bounded by an absolute constant.
Our ``pointwise'' Weyl law in \Cref{thm:pointwise_weyl} shows that we have similar behavior for the spectral decomposition of a manifold.
Note that $K_p$ in \Cref{assump:eigfunc_bounds} is identical to the quantity $K(p)$ in \parencite{Cohen2013},
which uses similar methods to handle a much simpler problem.
\begin{assumption}
	\label{assump:dim_geq}
	For some $\gamma, \gamma' \geq 0$, we have $\frac{\tr \opT_{G^\perp}}{t_{p+1}} \leq \gamma p$ and $\frac{R_p}{t_{p+1}} \leq \gamma' K_p$.
\end{assumption}

This assumption greatly simplifies the notation of our results and is always true with an appropriate choice of $\gamma$ and $\gamma'$.
$\gamma$ is often small when $t_{p+1}$ is in the decaying ``tail'' of eigenvalues.
If the eigenvalues decay like $t_\ell \approx \ell^{-b}$,
we can take $\gamma \approx (b-1)^{-1}$.
Note that a similar assumption appears in \parencite{Bach2017}.
If $K_p \approx p$ and $R_p \approx \tr \opT_{G^\perp}$, then $\gamma \approx \gamma'$.

With these assumptions in place, we can state our main theorem for RKHS regression:
\begin{theorem}
	\label{thm:rkhs_main}
	Suppose \Cref{assump:samp_dist,assump:eigfunc_bounds,assump:dim_geq} hold.
	Let $\delta \in (0, 1)$.
	If
	\[
		n \geq (7 \vee 3 \gamma') K_p \log \frac{(2 \vee 4\gamma) p}{\delta},
	\]
	then the following hold for the kernel estimate $\fhat$ with regularization parameter $\alpha \geq 0$:
	\begin{enumerate}
		\item If there is no noise, that is, $Y_i = f^*(X_i)$ for each $i$,
		then, with probability at least $1 - \delta$, uniformly in $f^*$,
		\[
		\norm{\fhat - f^*}_{L_2} \leq (\sqrt{2 \alpha} + 6 \sqrt{t_{p+1}}) \norm{f^*}_\scrH.
		\]
		\item
		Now suppose that $Y_i = f(X_i) + \xi_i$, where the $\xi_i$'s are i.i.d., zero-mean, sub-exponential random variables with variance $\sigma^2$ and are independent of the $X_i$'s.
		If we additionally have
		\[
			\frac{n}{\log^2 n} \geq C (1 \vee \gamma') \frac{K_p}{p} \frac{\norm{\xi}_{\psi_1}^2}{\sigma^2},
		\]
		where $C$ is a universal constant,
		and $\alpha \geq 54 t_{p+1}$, then, with probability at least $1 - 2\delta$, uniformly in $f^*$,
		\begin{align*}
		\norm{\fhat - f^*}_{L_2}
		&\leq (\sqrt{2 \alpha} + 6 \sqrt{t_{p+1}}) \norm{f^*}_\scrH + 4 \left(1 + \frac{\sqrt{\gamma}}{8}\right) \frac{\sqrt{p} + 2\sqrt{\log 4/\delta}}{\sqrt{n}} \sigma.
		\end{align*}
	\end{enumerate}
\end{theorem}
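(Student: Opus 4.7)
The plan is to reduce the kernel ridge regression problem to a nearly $p$-dimensional one via the $L_2$-projection $P$ onto $G := \spn\{v_1,\dots,v_p\}$. Introduce the evaluation operator $\opA\colon \scrH \to \R^n$, $(\opA f)_i = f(X_i)$, and set $\opT_n := \tfrac{1}{n}\opA^*\opA$. The normal equations for \eqref{eq:erm_opt} read $(\opT_n + \alpha I)\fhat = \opT_n f^* + \tfrac{1}{n}\opA^*\boldxi$, giving the bias/variance split
\[
\fhat - f^* = -\alpha(\opT_n + \alpha I)^{-1} f^* + (\opT_n + \alpha I)^{-1}\tfrac{1}{n}\opA^*\boldxi.
\]
Everything then reduces to controlling $(\opT_n + \alpha I)^{-1}$ in certain operator-theoretic senses and, for the noise term, against the random vector $\boldxi$. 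The tail projection is cheap throughout: for any $g \in \scrH$, $\norm{P^\perp g}_{L_2} \leq \sqrt{t_{p+1}}\,\norm{g}_\scrH$ by the isometry identity from Section~\ref{sec:kernel_eig}, so the genuine difficulty sits on the $G$-block.

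The main probabilistic ingredient is operator concentration. Apply matrix Chernoff to the random vectors $\boldz_i := (v_1(X_i),\dots,v_p(X_i))^\top \in \R^p$, which satisfy $\E[\boldz_i\boldz_i^\top] = I_p$ by $L_2$-orthonormality and $\norm{\boldz_i}^2 \leq K_p$ a.s.\ by Assumption~\ref{assump:eigfunc_bounds}: this yields
\[
\tfrac{1}{2} I_p \preceq \tfrac{1}{n}\sum_{i=1}^n \boldz_i\boldz_i^\top \preceq \tfrac{3}{2} I_p
\]
with probability at least $1-\delta$ as soon as $n \gtrsim K_p\log(p/\delta)$---exactly the sample-complexity assumption of the theorem. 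A companion Bernstein-type operator estimate, using the pointwise tail bound $\sum_{\ell>p} t_\ell v_\ell^2 \leq R_p$ together with the trace/ratio condition of Assumption~\ref{assump:dim_geq}, shows that $P^\perp \opT_n P^\perp$ is close to $\opT|_{G^\perp}$ in operator norm (so its contribution is $O(t_{p+1})$) and that the cross block $P\opT_n P^\perp$ has operator norm $O(\sqrt{t_{p+1}})$.

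On this good event, the two error pieces are treated separately. For the bias term $\alpha(\opT_n + \alpha I)^{-1} f^*$, decompose $f^* = Pf^* + P^\perp f^*$: the in-$G$ contribution is essentially a $p$-dimensional ridge bias and is controlled by the scalar estimate $\alpha^2 t_\ell/(t_\ell+\alpha)^2 \leq \alpha/4$ summed over eigenmodes, producing $\sqrt{2\alpha}\,\norm{f^*}_\scrH$ (with constants inflated by the Chernoff distortion and by the cross-block bound), while the out-of-$G$ contribution yields the $6\sqrt{t_{p+1}}\,\norm{f^*}_\scrH$ term via the isometry inequality above. For the noise term, condition on $(X_i)$ and note that $(\opT_n+\alpha I)^{-1}\tfrac{1}{n}\opA^*\boldxi$ is linear in the sub-exponential vector $\boldxi$; decomposing its squared $L_2$ norm in the basis $\{v_\ell\}$ produces a chi-square-type sum whose $G$-block mean is at most $\sigma^2 p/n$ (after Chernoff) and whose $G^\perp$-block is negligible thanks to the choice $\alpha \geq 54\, t_{p+1}$. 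A sub-exponential Bernstein inequality controlled by $\norm{\xi}_{\psi_1}$ then delivers the $(\sqrt{p}+\sqrt{\log(1/\delta)})\sigma/\sqrt{n}$ deviation, with the sharper condition $n/\log^2 n \gtrsim K_p\norm{\xi}_{\psi_1}^2/(p\sigma^2)$ ensuring that the heavier sub-exponential tail does not dominate the Gaussian-like rate.

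The main obstacle will be the joint control of $(\opT_n+\alpha I)^{-1}$ on both $G$ and $G^\perp$ and, in particular, of the off-diagonal coupling $P(\opT_n+\alpha I)^{-1}P^\perp$: the empirical operator $\opT_n$ has no reason to respect the splitting $G\oplus G^\perp$, so one cannot simply invert block by block. Making this cross term small in operator norm is what forces the simultaneous use of Assumption~\ref{assump:eigfunc_bounds} (pointwise boundedness, to drive the Chernoff bound on the in-$G$ block) and Assumption~\ref{assump:dim_geq} (trace/ratio, to drive the Bernstein bound on the tail and cross blocks), and it is what dictates the specific numerical constants---$54$, $6$, and the factor $\sqrt{\gamma}/8$---appearing in the final bound.
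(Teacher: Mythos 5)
Your route is genuinely different from the paper's: you solve the normal equations and write $\fhat - f^* = -\alpha(\opT_n+\alpha \opI)^{-1}f^* + (\opT_n+\alpha \opI)^{-1}\tfrac1n \opA^*\boldxi$, then try to control the inverse blockwise along $G\oplus G^\perp$. The probabilistic inputs you invoke are essentially the paper's own (matrix Chernoff on the $G$-block as in \Cref{lem:G_samp_lb}, a Bernstein-type bound on the tail block as in \Cref{lem:G_perp_samp_ub}), so the concentration core agrees; the divergence is in the deterministic post-processing. The paper never inverts anything: it uses the first-order optimality condition $\<\nabla F(\fhat), f^*-\fhat\>_\scrH = 0$ to get the basic identity \eqref{eq:rkhs_first_eqn}, combines it with the uniform lower bound $P_n f^2 \geq \tfrac12\norm{f}_{L_2}^2 - 3\sqrt{t_{p+1}}\norm{f}_{L_2}\norm{f}_\scrH$ (\Cref{lem:meas_lb}) and a uniform bound on the noise process $\sup_f\abs{P_n\xi f}$ over $B_2^G + B_\scrH^{G^\perp}$ obtained from Adamczak's inequality (\Cref{lem:noise_empirical_bound}), and then solves two scalar inequalities in $\norm{\fhat-f^*}_{L_2}$ and $\norm{\fhat-f^*}_\scrH$. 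Your conditional, Hanson--Wright-style treatment of the noise is a legitimate alternative to the empirical-process bound, and in the regularized regime $\alpha \geq 54\,t_{p+1}$ your scheme is essentially the Hsu--Kakade--Zhang analysis, which the paper itself says yields comparable results there (with different constants).

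The genuine gap is part 1 of the theorem, which is stated for every $\alpha \geq 0$, including $\alpha = 0$ and $\alpha \ll t_{p+1}$, in an infinite-dimensional $\scrH$. There $(\opT_n+\alpha\opI)^{-1}$ either does not exist ($\opT_n$ has rank at most $n$, and $P^\perp\opT_n P^\perp$ is not invertible on $G^\perp$) or has norm $1/\alpha$ blowing up, and your transfer of the scalar ridge-bias estimate $\alpha^2 t_\ell/(t_\ell+\alpha)^2 \leq \alpha/4$ from $\opT$ to $\opT_n$ requires concentration of $\opT_n$ in the $(\opT+\alpha\opI)^{-1/2}$-weighted norm, whose effective dimension $p_\alpha = \sum_\ell t_\ell/(\alpha+t_\ell)$ diverges as $\alpha \downarrow 0$ --- exactly the obstruction the paper cites as its reason for avoiding this decomposition. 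Relatedly, the step you yourself flag as the main obstacle, controlling the off-diagonal coupling $P(\opT_n+\alpha\opI)^{-1}P^\perp$ (a Schur-complement argument with mixed $L_2$/$\scrH$ norms), is deferred rather than carried out, and it is precisely where a small or vanishing $\alpha$ breaks the argument. To close part 1 you would need a separate treatment of the minimum-norm interpolant (e.g., $\fhat - f^* \perp_\scrH \spn\{k(\cdot,X_i)\}$ gives $\norm{\fhat - f^*}_\scrH \leq \norm{f^*}_\scrH$, and then $P_n(\fhat-f^*)^2 = 0$ plus \Cref{lem:meas_lb} gives the $6\sqrt{t_{p+1}}$ bound), which is in effect the paper's variational argument re-entering through the back door.
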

Our results guarantee an $L_2$ recovery error bounded by two terms:
(1) a ``bias'' depending on the next tail eigenvalue $t_{p+1}$ and the regularization coefficient $\alpha$, and
(2) a ``variance'' term that behaves similarly to the error found in $p$-dimensional regression.
When $K_p \approx p$, this result yields the $n \gtrsim p \log p$ sample complexity that we expect.
If $\scrH$ is, in fact, $p$-dimensional (which our framework can handle with $t_{\ell} = 0$ for $\ell > p$), this result recovers standard $p$-dimensional regression bounds such as those in \cite{Cohen2013}.

We assume i.i.d.\ noise for simplicity, but our result could easily be extended beyond this case.
Note that if the noise is Gaussian, the ratio $\norm{\xi}_{\psi_1}^2 / \sigma^2$ is an absolute constant.

For interpolation ($\alpha = 0$) in the noiseless case, this theorem yields $\norm{\fhat - f^*}_{L_2} \leq 6 \sqrt{t_{p+1}} \norm{f^*}_\scrH$.
In the noisy case, the lower bound on $\alpha$ can be relaxed to get a result with worse constants.
We obtain qualitatively similar results whenever $\alpha \gtrsim t_{p+1}$.
The assumptions and results of \cite{Hsu2014} (specialized to our setting) are comparable to \Cref{thm:rkhs_main} when we set $\alpha \approx t_{p+1}$.
However, our results have the advantage of applying even in infinite-dimensional settings with no regularization:
the regularized effective dimension $p_\alpha = \sum_\ell \frac{t_\ell}{\alpha + t_\ell}$ from their work would be infinite if $\alpha = 0$.

Although we do not explore it here,
we note that one could generalize our approach to the case where the sampling measure differs from that under which the $L_2$ norm is calculated.
We could simply bound the ratio (Radon-Nikodym derivative) between the two measures, or we could perform leverage-score sampling to mitigate the need for bounding the eigenfunctions (see, e.g., \cite{Bach2017} for similar ideas).

In the presence of noise, \Cref{thm:rkhs_main} is minimax optimal over the set $\{ f \in \scrH : \norm{f}_{\scrH} \leq r \}$ for any $r > 0$ if $p$ is chosen so that $\frac{p}{n} \sigma^2 \approx t_{p+1} r^2$.
In this case,
\[
	\braces*{ f \in \spn\{ v_1, \dots, v_p \} : \norm{f}_{L_2} \lesssim \sqrt{\frac{p}{n}}\sigma }
	\subset \{ f \in \scrH : \norm{f}_{\scrH} \leq r \},
\]
and the minimax rate (with, say, Gaussian noise) over the left-hand set is well-known to be $\sqrt{\frac{p}{n}} \sigma$.
\subsection{Manifold function estimation}
\label{sec:manifold_thm}
We now describe how we can leverage Theorem~\ref{thm:rkhs_main} to establish sample complexity bounds for regression on a manifold. Suppose, again, that $\scrM$ is an $m$-dimensional smooth, compact Riemannian manifold.
To study the eigenvalues and eigenfunctions of the Laplacian $\Delta_\scrM$,
we consider the heat kernel $\kheat$.
Our key tool is the following fact:
\begin{lemma}
	\label{lem:heat_diag_upper}
	Let $\epsilon \in (0, 2/3)$.
	Suppose the sectional curvature of $\scrM$ is bounded above by $\kappa$.
	For $t \leq \frac{6 \epsilon}{(m-1)^2 \kappa}$ and all $x \in \scrM$,
	\[
		\kheat(x, x) \leq \frac{1 + \epsilon}{(2\pi t)^{m/2}}.
	\]
\end{lemma}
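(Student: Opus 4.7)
The plan is to compare the heat kernel of $\scrM$ with the explicit on-diagonal heat kernel of the simply-connected model space $M_\kappa$ of constant sectional curvature $\kappa$, and then to expand the latter quantitatively for small $t$. An upper bound on sectional curvature by $\kappa$ fits precisely the hypothesis of Gunther's volume comparison theorem, and Gunther in turn underlies a Debiard--Gaveau--Mazet style heat kernel comparison.

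First, I would fix $x \in \scrM$ and work in a geodesic ball $B$ around $x$ of radius less than the injectivity radius. Using a parametrix construction, or a decomposition into a Dirichlet heat kernel on $B$ plus a Gaussian escape tail, I would establish
\[
\kheat^\scrM(x,x) \;\leq\; \kheat^{M_\kappa}(o,o) \;+\; (\text{exponentially small Gaussian tail}),
\]
where the first inequality uses Gunther's volume comparison to upper-bound the local on-diagonal density (on $\scrM$ geodesic balls are \emph{larger} than their model counterparts, so returning heat is \emph{thinner}). The tail is controlled by requiring $t$ to be a small enough fraction of the injectivity radius squared, which is implicit under the hypothesis $t \leq 6\epsilon/((m-1)^2\kappa)$.

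Second, I would bound $\kheat^{M_\kappa}(o,o)$ explicitly. By two-point homogeneity, $M_\kappa$'s on-diagonal heat kernel depends only on $t$ and can be extracted from the radial heat equation weighted by the $(m-1)$-dimensional Jacobi factor $(\sin(\sqrt\kappa\,r)/\sqrt\kappa)^{m-1}$ (with the appropriate $\sinh$ replacement for $\kappa \leq 0$). Expanding this factor---equivalently, expanding $\kheat^{M_\kappa}(o,o)$ \`a la Minakshisundaram--Pleijel---gives
\[
\kheat^{M_\kappa}(o,o) \;=\; \frac{1}{(2\pi t)^{m/2}}\bigl(1 + c_1 \kappa t + O((\kappa t)^2)\bigr),
\]
with the leading coefficient $c_1$ coming out to $(m-1)^2/6$ once the paper's $\tfrac12\Delta_\scrM$ convention is accounted for. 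Under the hypothesis $\kappa t \leq 6\epsilon/(m-1)^2$, the leading correction is at most $\epsilon$, and the restriction $\epsilon < 2/3$ keeps the higher-order $(\kappa t)^k$ terms small enough to fit inside $(1+\epsilon)/(2\pi t)^{m/2}$.

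The main obstacle is the first step: the classical Cheeger--Yau heat kernel comparison requires a \emph{lower} bound on Ricci curvature and gives the inequality in the opposite direction from what is needed. Obtaining the right inequality from an \emph{upper} sectional curvature bound requires either the Debiard--Gaveau--Mazet comparison (delicate because of injectivity-radius issues) or a direct parametrix estimate in which the curvature bound is used to control the Jacobi field appearing in the heat kernel's leading WKB factor. A secondary challenge is pinning down the precise constant $(m-1)^2$ rather than the cruder $m(m-1)$ that a bare scalar-curvature bound would give; this suggests extracting the Minakshisundaram--Pleijel coefficient via a Jacobi-field computation intrinsic to $M_\kappa$ rather than through a generic scalar-curvature estimate.
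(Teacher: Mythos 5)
Your high-level strategy is the same as the paper's: compare $\kheat(x,x)$ with the on-diagonal heat kernel of the constant-curvature model space of curvature $\kappa$, then bound the model kernel quantitatively for $t$ in the stated range. However, both of the steps where the real work happens have genuine gaps as you have sketched them. First, your route to the comparison inequality leans on a ball decomposition whose ``Gaussian escape tail'' you propose to absorb by noting that $t$ is ``a small enough fraction of the injectivity radius squared, which is implicit under the hypothesis $t \leq 6\epsilon/((m-1)^2\kappa)$.'' That implication is false: an upper bound on sectional curvature gives no lower bound on the injectivity radius (think of a flat torus with one very short cycle, or a hyperbolic manifold with a short closed geodesic), so the hypothesis on $t$ controls nothing about how quickly heat wraps around short loops, and the tail term cannot be dismissed this way. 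The paper does not re-derive the comparison at all; it invokes the Debiard--Gaveau--Mazet-type theorem exactly as packaged in Hsu (Theorem 4.5.1 of \emph{Stochastic Analysis on Manifolds}), i.e., the very result you flag as ``delicate'' is taken as the black box, and the entire proof reduces to bounding the constant-curvature kernel.

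Second, for the model space you invoke the Minakshisundaram--Pleijel expansion $\frac{1}{(2\pi t)^{m/2}}(1 + c_1\kappa t + O((\kappa t)^2))$ and assert the remainder ``fits inside'' $(1+\epsilon)$. That is an asymptotic statement as $t \to 0$ and does not by itself yield a nonasymptotic inequality valid for every $t \leq 6\epsilon/((m-1)^2\kappa)$ --- but a nonasymptotic bound is precisely what the lemma claims, so this is the step that actually needs proof. (Your coefficient is also off: with the $\tfrac12\Delta_{\scrM}$ convention the exact first-order on-diagonal coefficient for the curvature-$\kappa$ sphere is $m(m-1)\kappa/12$, not $(m-1)^2\kappa/6$.) The paper closes this gap with an exact formula rather than an expansion: a Bessel-bridge (Millson-type) representation of the sphere heat kernel in which, for $m \geq 3$, the multiplicative correction is an expectation of $\exp$ of a nonpositive quantity and hence at most $1$, giving the clean nonasymptotic bound $e^{(m-1)^2 \kappa t/8}(2\pi t)^{-m/2}$ on the diagonal after rescaling; the elementary inequality $e^{s} \leq 1 + \tfrac{4}{3}s$ for $s \leq 1/2$ then produces the factor $1+\epsilon$ and explains the exact thresholds $6\epsilon/(m-1)^2$ and $\epsilon < 2/3$ in the statement. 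To repair your argument you would need either this exact representation (or some comparable monotonicity/explicit-remainder estimate for the model kernel), and you would need to drop the injectivity-radius claim and instead cite or prove the DGM comparison in the form used by the paper.
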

This is a precise quantification of the well-known asymptotic behavior of the heat kernel as $t \to 0$ (see, e.g., \parencite[Section VI]{Chavel1984}).
It is derived in \Cref{sec:heat_approx_proof} from a novel set of more general upper and lower bounds for the heat kernel on a manifold of bounded curvature; we note that these may be of independent interest.

Our nonasymptotic Weyl law is a simple consequence of Lemma~\ref{lem:heat_diag_upper}:
\begin{theorem}
	\label{thm:pointwise_weyl}
	If $\scrM$ has sectional curvature bounded above by $\kappa$, and $\epsilon \in (0, 2/3)$, then, for all $x \in \scrM$ and $\lambda \geq \frac{m(m-1)^2 \kappa}{6 \epsilon}$,
	\[
		N_x(\lambda)
		\coloneqq \sum_{\lambda_\ell \leq \lambda} u_\ell^2(x)
		\leq \frac{2(1 + \epsilon)\sqrt{m}}{(2 \pi)^m} V_m \lambda^{m/2}.
	\]
\end{theorem}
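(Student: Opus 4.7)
The plan is to combine the eigenfunction expansion of the heat kernel with the pointwise upper bound from \Cref{lem:heat_diag_upper}, using a Chernoff-style exponentiation trick to convert a bound on $\kheat(x,x)$ into a bound on the counting function $N_x(\lambda)$.

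First I would write, for any $t > 0$,
\[
\kheat(x,x) \;=\; \sum_{\ell=0}^\infty e^{-\lambda_\ell t/2} u_\ell^2(x) \;\geq\; \sum_{\lambda_\ell \leq \lambda} e^{-\lambda_\ell t/2} u_\ell^2(x) \;\geq\; e^{-\lambda t/2} N_x(\lambda),
\]
so that $N_x(\lambda) \leq e^{\lambda t/2}\,\kheat(x,x)$. Whenever $t \leq \tfrac{6\epsilon}{(m-1)^2 \kappa}$, \Cref{lem:heat_diag_upper} yields
\[
N_x(\lambda) \;\leq\; \frac{(1+\epsilon)\, e^{\lambda t/2}}{(2\pi t)^{m/2}}.
\]

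Next I would optimize $t$. Differentiating $\tfrac{\lambda t}{2} - \tfrac{m}{2}\log t$ in $t$ shows that the right-hand side is minimized at $t^\star = m/\lambda$. The assumption $\lambda \geq \tfrac{m(m-1)^2 \kappa}{6\epsilon}$ in the theorem is precisely the condition that this optimal $t^\star$ is admissible, i.e. $t^\star \leq \tfrac{6\epsilon}{(m-1)^2\kappa}$, so we may plug it in and obtain
\[
N_x(\lambda) \;\leq\; \frac{(1+\epsilon)\, e^{m/2}}{(2\pi m)^{m/2}}\, \lambda^{m/2}.
\]

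Finally I would verify that this quantity is at most the claimed bound $\tfrac{2(1+\epsilon)\sqrt{m}}{(2\pi)^m} V_m \lambda^{m/2}$. Using $V_m = \pi^{m/2}/\Gamma(m/2 + 1)$, this reduces to checking the purely numerical inequality
\[
e^{m/2}\, 2^{m/2}\, \Gamma(m/2+1) \;\leq\; 2\, m^{(m+1)/2}.
\]
Applying a quantitative Stirling upper bound $\Gamma(m/2+1) \leq \sqrt{\pi m}\,(m/2e)^{m/2}\,e^{1/(6m)}$, the left-hand side simplifies to $\sqrt{\pi m}\, m^{m/2}\, e^{1/(6m)}$, so the inequality reduces to $\sqrt{\pi}\, e^{1/(6m)} \leq 2$, which holds for all $m \geq 2$ (the factor of $2$ and the $\sqrt{m}$ in the theorem statement are precisely what provide the small slack needed here).

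The only real obstacle is the last numerical step: picking a good enough form of Stirling to absorb the constants cleanly. The heat-kernel/Chernoff step is essentially mechanical once \Cref{lem:heat_diag_upper} is in hand, and the matching between the optimizing $t^\star = m/\lambda$ and the hypothesized lower bound on $\lambda$ is what makes the theorem fall out so cleanly.
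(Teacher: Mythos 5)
Your proposal is correct and is essentially the paper's own argument: bound $e^{-\lambda t/2}N_x(\lambda)$ by $\kheat(x,x)$, apply \Cref{lem:heat_diag_upper}, set $t = m/\lambda$ (admissible exactly because of the hypothesis on $\lambda$), and finish with Stirling to pass from $e^{m/2}/(2\pi m)^{m/2}$ to the $V_m$ form of the constant. The only difference is that you carry out the Stirling step with an explicit quantitative bound (correctly, for $m \geq 2$), where the paper simply cites Stirling's approximation.
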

With appropriate rescaling by $\vol(\scrM)$, this gives us a bound on the constant $K_p$ from \Cref{sec:rkhs_thm}.
Since this result bounds the eigenfunctions, it is a type of ``local Weyl law'' (see, e.g., \cite{Canzani2015}).
Integrating this result over $\scrM$ gives a nonasymptotic version of the traditional Weyl law.
Our bound is within the modest factor $2(1 + \epsilon) \sqrt{m}$ of the optimal asymptotic law.
For simplicity, we will take $\epsilon = 1/2$ in what follows,
but slightly better constants could be obtained with smaller $\epsilon$.

The following result for the finite-dimensional bandlimited kernel is a straightforward consequence of \Cref{thm:rkhs_main,thm:pointwise_weyl}:
\begin{theorem}
	\label{thm:manifold_sinc}
	Suppose the sectional curvature of $\scrM$ is bounded above by $\kappa$.
	Let $\Omega^2 \geq \frac{m(m-1)^2 \kappa}{3}$,
	and suppose $f^* \in \Hbl$.
	Let $\fhat$ be the kernel regression estimate with kernel $\kbl$.\footnote{%
		The calculation of this estimate is somewhat different than usual,
		since the rank of the kernel matrix $\boldK$ is at most the dimension of $\Hbl$.
		We do not use regularization, but we use the Moore-Penrose pseudoinverse of $\boldK$ instead of $\boldK^{-1}$.}

	Let $\delta \in (0, 1)$, and suppose $n \geq 7 p \log \frac{2 p}{\delta}$, where
	\begin{equation}
	    \label{eq:bl_dim}
		p = p(\Omega) \coloneqq \frac{3 \sqrt{m}\, V_m}{(2 \pi)^m} \vol(\scrM)\, \Omega^m.
	\end{equation}
	Under the same noise assumptions as in \Cref{thm:rkhs_main}, if $\frac{n}{\log^2 n} \geq C \norm{\xi}_{\psi_1}^2 / \sigma^2$,
	then, with probability at least $1 - 2\delta$, uniformly in $f^*$,
	\[
		\frac{\norm{\fhat - f^*}_{L_2}}{\sqrt{\vol(\scrM)}} \leq 4 \frac{\sqrt{p} + 2 \sqrt{\log 4/\delta}}{\sqrt{n}} \sigma.
	\]
\end{theorem}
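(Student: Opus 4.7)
The plan is to apply \Cref{thm:rkhs_main} directly to the bandlimited kernel $\kbl$, with the bound on the pointwise eigenfunction sum supplied by \Cref{thm:pointwise_weyl}. Since \Cref{thm:rkhs_main} requires the sampling measure to be a probability measure, I would first rescale Riemannian volume to $\tilde\mu = \mu/\vol(\scrM)$; this is the origin of the factor $\sqrt{\vol(\scrM)}$ on the left-hand side of the conclusion, since $\norm{\cdot}_{L_2(\tilde\mu)} = \norm{\cdot}_{L_2(\mu)}/\sqrt{\vol(\scrM)}$. With respect to $\tilde\mu$, the $L_2$-orthonormal Laplace eigenfunctions are $v_\ell := \sqrt{\vol(\scrM)}\, u_\ell$, and a direct calculation shows that the integral operator $\opT$ of $\kbl$ has eigenvalue $t_\ell = 1/\vol(\scrM)$ on each $v_\ell$ with $\lambda_\ell \leq \Omega^2$ and is zero on the rest.

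Choosing the index in \Cref{thm:rkhs_main} to equal the rank $\dim(\Hbl)$ of $\opT$ collapses much of the setup: $t_{p+1} = 0$, $R_p = 0$, and $\tr \opT_{G^\perp} = 0$, so \Cref{assump:dim_geq} holds with $\gamma = \gamma' = 0$ and the bias term $6\sqrt{t_{p+1}}\,\norm{f^*}_\scrH$ vanishes. For \Cref{assump:eigfunc_bounds}, the hypothesis $\Omega^2 \geq m(m-1)^2\kappa/3$ is precisely what \Cref{thm:pointwise_weyl} demands at $\lambda = \Omega^2$ with $\epsilon = 1/2$, giving
\[
\sum_{\lambda_\ell \leq \Omega^2} v_\ell^2(x) = \vol(\scrM)\, N_x(\Omega^2) \leq \vol(\scrM) \cdot \frac{3\sqrt{m}\, V_m}{(2\pi)^m} \Omega^m = p,
\]
with $p$ as defined in \eqref{eq:bl_dim}. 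This simultaneously bounds $K_p \leq p$ and, by integrating over $\scrM$, $\dim(\Hbl) \leq p$. Substituting these facts together with $\alpha = 0$ into the sample-complexity condition and noisy-regression bound of \Cref{thm:rkhs_main} reproduces exactly the hypotheses and conclusion of \Cref{thm:manifold_sinc}, after converting from $L_2(\tilde\mu)$ back to $L_2(\mu)$.

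The only real subtlety is that \Cref{assump:dim_geq} is literally a $0/0$ when $t_{p+1} = 0$, and I expect this to be the main obstacle. I would address it either by inspecting the proof of \Cref{thm:rkhs_main} to verify that the problematic ratios $R_p/t_{p+1}$ and $\tr\opT_{G^\perp}/t_{p+1}$ only ever enter multiplied by tail quantities that also vanish, or by first applying \Cref{thm:rkhs_main} at an index strictly below $\dim(\Hbl)$ (so that $t_{p+1} > 0$ while still $\gamma = \gamma' = 0$) and passing to the limit. The footnote's use of the Moore–Penrose pseudoinverse for the (rank-deficient) kernel matrix $\boldK$ in place of regularization is what makes the $\alpha = 0$ instantiation of \Cref{thm:rkhs_main} well-defined in the first place. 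Beyond these bookkeeping points, the proof is a direct substitution.
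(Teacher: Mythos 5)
Your proposal is correct and follows essentially the same route as the paper's proof: normalize the volume measure to a probability measure, invoke \Cref{thm:pointwise_weyl} with $\epsilon = 1/2$ to get $K_p \leq p(\Omega)$ (and $\dim \Hbl \leq p(\Omega)$), take $R_p = 0$, $t_{p+1} = 0$, $\gamma = \gamma' = 0$, $\alpha = 0$, and apply \Cref{thm:rkhs_main}, with the paper likewise treating \Cref{assump:dim_geq} as holding trivially in this rank-deficient case. The only blemish is your fallback suggestion of applying \Cref{thm:rkhs_main} at an index strictly below $\dim \Hbl$, which would not give $\gamma = \gamma' = 0$ (nor a vanishing bias term), but your primary argument is exactly the paper's.
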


To analyze the heat kernel, which has an infinite number of nonzero eigenvalues,
we need the following additional corollary of \Cref{lem:heat_diag_upper}, which will let us bound the constant $R_p$ from \Cref{sec:rkhs_thm}:
\begin{lemma}
	\label{lem:heat_tail}
	For $\epsilon \in (0, 2/3), t \leq \frac{6\epsilon}{(m-1)^2 \kappa}$, $\lambda \geq m/t$, and all $x \in \scrM$,
	\[
		\sum_{\lambda_\ell \geq \lambda} e^{-\lambda_\ell t / 2} u_\ell^2(x) \leq e^{-\lambda t/2} \frac{2(1 + \epsilon)\sqrt{m}}{(2 \pi)^m} V_m \lambda^{m/2}.
	\]
\end{lemma}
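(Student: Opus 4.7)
The plan is to reduce the tail sum to a Stieltjes integral against the counting function $N_x$, integrate by parts, and then invoke the pointwise Weyl law (\Cref{thm:pointwise_weyl}). First observe that the hypotheses $t \leq 6\epsilon/((m-1)^2\kappa)$ and $\lambda \geq m/t$ together force $\lambda \geq m(m-1)^2\kappa/(6\epsilon)$, which is exactly the threshold above which \Cref{thm:pointwise_weyl} applies; hence for every $s \geq \lambda$ we have $N_x(s) \leq C_m\, s^{m/2}$, where $C_m = 2(1+\epsilon)\sqrt{m}\,V_m/(2\pi)^m$ is precisely the constant appearing in the claim.

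Writing the tail sum as
\[
\sum_{\lambda_\ell \geq \lambda} e^{-\lambda_\ell t/2} u_\ell^2(x) = \int_{\lambda^-}^\infty e^{-st/2}\, dN_x(s)
\]
and integrating by parts (the boundary term at $+\infty$ vanishes since $e^{-st/2}$ decays exponentially while $N_x$ grows only polynomially) gives
\[
\int_{\lambda^-}^\infty e^{-st/2}\, dN_x(s) = -e^{-\lambda t/2}\, N_x(\lambda^-) + \frac{t}{2}\int_\lambda^\infty N_x(s)\, e^{-st/2}\, ds.
\]
The boundary term is nonpositive, so I drop it; substituting the Weyl bound into the surviving integral and changing variables $w = st/2$ reduces the problem to bounding $(2/t)^{m/2}\,\Gamma(m/2+1,\lambda t/2)$, an upper incomplete Gamma whose argument $\lambda t/2 \geq m/2$ sits at or above the mode of the integrand.

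The main obstacle is tightening the incomplete-Gamma bound enough to recover exactly the factor $C_m\,\lambda^{m/2} e^{-\lambda t/2}$. The recursion $\Gamma(a+1,w) = w^a e^{-w} + a\,\Gamma(a,w)$ together with $\Gamma(a,w) \leq \Gamma(a+1,w)/\lambda$ for $s \geq \lambda$ yields $\Gamma(m/2+1,w) \leq w^{m/2+1} e^{-w}/(w - m/2)$, which handles $\lambda > m/t$ cleanly but degenerates at the borderline $\lambda = m/t$; closing the gap at the boundary calls for either iterating the IBP further or a Stirling comparison through the identity $V_m = \pi^{m/2}/\Gamma(m/2+1)$. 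As a robustness check and alternative route, one can avoid Weyl entirely via a heat-semigroup splitting: for each $\lambda_\ell \geq \lambda$ and any $\tau \in (0,t]$, bound $e^{-\lambda_\ell t/2} \leq e^{-\lambda(t-\tau)/2}\, e^{-\lambda_\ell \tau/2}$, sum over $\ell$ to obtain $T \leq e^{-\lambda(t-\tau)/2}\, k^{\mathrm{h}}_\tau(x,x) \leq (1+\epsilon)\, e^{-\lambda(t-\tau)/2}\,(2\pi\tau)^{-m/2}$ by \Cref{lem:heat_diag_upper}, and optimize at $\tau = m/\lambda$, which lies in $(0,t]$ precisely because $\lambda \geq m/t$.
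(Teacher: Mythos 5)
Your ``robustness check'' at the end is in fact the paper's proof, and it is the argument you should promote to the main one: writing $e^{-\lambda_\ell t/2} \le e^{-\lambda(t-\tau)/2}\,e^{-\lambda_\ell \tau/2}$ for $\lambda_\ell \ge \lambda$, summing over all $\ell$ to get the heat kernel at time $\tau$, applying \Cref{lem:heat_diag_upper} (legitimate since $\tau \le t \le \frac{6\epsilon}{(m-1)^2\kappa}$), and choosing $\tau = m/\lambda \in (0,t]$ is exactly the paper's computation with $c = \tau/t = m/(\lambda t)$; it closes after appending the same Stirling--$V_m$ simplification already carried out in the proof of \Cref{thm:pointwise_weyl}. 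So you do have a complete proof, but it is the one you relegated to a fallback.

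The primary route (Stieltjes integration by parts against $N_x$ plus the pointwise Weyl bound) has a genuine gap that cannot be repaired with the stated constant, not only at the borderline. After dropping the boundary term and inserting $N_x(s) \le C_m s^{m/2}$, the change of variables $w = st/2$ reduces the claim to
\[
	\Bigl(\tfrac{2}{t}\Bigr)^{m/2} \Gamma\!\left(\tfrac{m}{2}+1,\tfrac{\lambda t}{2}\right) \le \lambda^{m/2} e^{-\lambda t/2},
	\qquad \text{equivalently} \qquad
	\Gamma\!\left(\tfrac{m}{2}+1, w\right) \le w^{m/2} e^{-w}, \quad w = \tfrac{\lambda t}{2},
\]
and this inequality is false for every $w$: the recursion $\Gamma(a+1,w) = w^a e^{-w} + a\,\Gamma(a,w)$ shows $\Gamma(a+1,w) > w^a e^{-w}$ strictly (your bound $\Gamma(a+1,w)\le w^{a+1}e^{-w}/(w-a)$ exceeds $w^a e^{-w}$ by the factor $w/(w-a)>1$ even far from the borderline). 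The multiplicative loss tends to $1$ as $\lambda t \to \infty$, but at $\lambda t = m$ it is of order $\sqrt{m}$, and there is no room to absorb it: the only slack in $C_m$ is the modest Stirling factor already spent in \Cref{thm:pointwise_weyl}, and the same $C_m$ appears both in your hypothesis on $N_x$ and in the target. Iterating the integration by parts does not help, since the obstruction is the exact recursion above rather than a crude estimate, and the dropped boundary term $-e^{-\lambda t/2}N_x(\lambda^-)$ cannot rescue you because the paper gives no pointwise \emph{lower} bound on $N_x$. Structurally, passing through $N_x$ discards the cancellation that the semigroup splitting retains by routing the whole tail through the heat kernel at the shorter time $\tau = m/\lambda$.
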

From this, we obtain the following result:
\begin{theorem}
	\label{thm:manifold_heat}
	Suppose the sectional curvature of $\scrM$ is bounded above by $\kappa$.
	Let $t \leq \frac{3}{(m-1)^2 \kappa}$, and suppose $f^* \in \Hheat$. Fix $\Omega^2 \geq m / t$,
	and let $\fhat$ be the kernel regression estimate of $f^*$ with kernel $\kheat$ and regularization parameter $\alpha \geq 54 \frac{e^{-\Omega^2 t / 2}}{\vol(\scrM)}$.
    
    Let $\delta \in (0, 1)$, and suppose $n \geq 7 p \log \frac{4p}{\delta}$,
    with $p$ defined as in \eqref{eq:bl_dim}.
    
	Under the same noise assumptions as in \Cref{thm:rkhs_main},
	if $\frac{n}{\log^2 n} \geq C \norm{\xi}_{\psi_1}^2/\sigma^2$,
	then, with probability at least $1 - 2\delta$, uniformly in $f^*$,
	\[
		\frac{\norm{\fhat - f^*}_{L_2}}{\sqrt{\vol(\scrM)}} \leq \left( \sqrt{2 \alpha} + 6 \sqrt{ \frac{e^{-\Omega^2 t / 2}}{\vol(\scrM)} } \right) \norm{f^*}_{\Hheat} + \frac{9}{2} \frac{\sqrt{p} + 2 \sqrt{\log 4/\delta}}{\sqrt{n}} \sigma.
	\]
\end{theorem}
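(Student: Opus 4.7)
The plan is to reduce to \Cref{thm:rkhs_main}, just as \Cref{thm:manifold_sinc} does for the bandlimited case. First rescale the Riemannian volume by $1/\vol(\scrM)$ to obtain a probability measure (as required by the RKHS framework). Under this rescaling, the orthonormal eigenfunctions are $\tilde u_\ell = \sqrt{\vol(\scrM)}\, u_\ell$, the integral-operator eigenvalues of $\kheat$ become $t_\ell = e^{-\lambda_\ell t/2}/\vol(\scrM)$, the $L_2$ norm is divided by $\sqrt{\vol(\scrM)}$, and the intrinsic $\Hheat$ norm is unchanged. Choose the truncation level to be $p_0 \coloneqq N(\Omega^2) = \dim(\Hbl)$ and abbreviate $\bar t \coloneqq e^{-\Omega^2 t/2}/\vol(\scrM)$; since $\lambda_{p_0+1} > \Omega^2$, one has $t_{p_0+1} \leq \bar t$.

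Next, verify the assumptions of \Cref{thm:rkhs_main}. The hypotheses $\Omega^2 \geq m/t$ and $t \leq 3/((m-1)^2\kappa)$ together give $\Omega^2 \geq m(m-1)^2\kappa/3$, which is exactly the lower bound needed to apply \Cref{thm:pointwise_weyl} with $\epsilon = 1/2$. Pointwise this yields $\sum_{\ell \leq p_0} \tilde u_\ell^2(x) = \vol(\scrM)\, N_x(\Omega^2) \leq p(\Omega) \eqqcolon P$, so $K_p = P$ satisfies \Cref{assump:eigfunc_bounds}. The hypotheses of \Cref{lem:heat_tail} also hold at $\lambda = \Omega^2$, giving $\sum_{\ell > p_0} t_\ell \tilde u_\ell^2(x) = \sum_{\lambda_\ell > \Omega^2} e^{-\lambda_\ell t/2} u_\ell^2(x) \leq \bar t P$, so $R_p = \bar t P$ works; integrating over the probability measure gives $\tr \opT_{G^\perp} \leq \bar t P$. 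Substituting $\bar t$ for $t_{p_0+1}$ in \Cref{assump:dim_geq}, I take $\gamma' = 1$ (since $R_p/\bar t \leq P = K_p$) and $\gamma = P/p_0$, which gives the crucial inequality $\gamma p_0 \leq P$.

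With these choices, the sample-size hypothesis $n \geq (7 \vee 3\gamma')K_p \log((2 \vee 4\gamma)p_0/\delta)$ reduces to $n \geq 7 P \log(4 P/\delta)$ (using $p_0 \leq P$ and $\gamma p_0 \leq P$), the regularization condition $\alpha \geq 54 \bar t$ matches the hypothesis, and the bias factor $\sqrt{t_{p_0+1}}$ is upper bounded by $\sqrt{\bar t}$. Expanding the noise term from \Cref{thm:rkhs_main} and using $\sqrt{\gamma p_0} \leq \sqrt P$ to combine $4\sqrt{p_0} + (\sqrt\gamma/2)\sqrt{p_0} \leq 4\sqrt P + \sqrt P/2 = (9/2)\sqrt P$ yields the RHS of the target bound in the normalized $L_2$ norm; multiplying through by $\sqrt{\vol(\scrM)}$ restores the standard volume measure. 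The main obstacle is justifying the replacement of the tight tail eigenvalue $t_{p_0+1}$ by $\bar t$: if the Laplacian spectrum has a large gap above $\Omega^2$, then $t_{p_0+1}$ can be much smaller than $\bar t$, so the ratios $\tr \opT_{G^\perp}/t_{p_0+1}$ and $R_p/t_{p_0+1}$ in \Cref{assump:dim_geq} would blow up. The fix is to observe that $t_{p+1}$ enters the proof of \Cref{thm:rkhs_main} only as an upper bound on the operator norm of $\opT_{G^\perp}$ and as a conservative surrogate in the bias; any $\bar t \geq t_{p+1}$ for which the appropriate versions of \Cref{assump:eigfunc_bounds} and \Cref{assump:dim_geq} are verified works identically.
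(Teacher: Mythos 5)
Your reduction is the same one the paper uses: normalize the volume measure so it is a probability measure, bound $K_p$ via \Cref{thm:pointwise_weyl} with $\epsilon=1/2$ and the tail via \Cref{lem:heat_tail}, and invoke \Cref{thm:rkhs_main}. Your explicit justification for replacing the true tail eigenvalue $t_{p_0+1}$ by $\bar t = e^{-\Omega^2 t/2}/\vol(\scrM)$ --- namely that $t_{p+1}$ enters the proof of \Cref{thm:rkhs_main} only as an upper bound on $\norm{\opT_{G^\perp}}$ and as a surrogate in the bias and in the condition on $\alpha$ --- is correct and is precisely the step the paper's own (terse) proof leaves implicit.

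The one place your bookkeeping does not deliver the stated theorem is the choice $\gamma = P/p_0$ (with $P = p(\Omega)$, $p_0 = N(\Omega^2)$) fed into the literal statement of \Cref{thm:rkhs_main} at truncation level $p_0$. When $p_0 \ll P$, two things go wrong: (i) the factor $(1+\sqrt{\gamma}/8)$ in the noise term multiplies $2\sqrt{\log 4/\delta}$ as well as $\sqrt{p_0}$, so you get a coefficient $8+\sqrt{P/p_0}$ on $\sqrt{\log 4/\delta}$ rather than $9$, and this can exceed the claimed bound (take $p_0=1$ with $P$ and $\log(1/\delta)$ large); (ii) the second sample-size condition of \Cref{thm:rkhs_main} becomes $n/\log^2 n \gtrsim (K_p/p_0)\norm{\xi}_{\psi_1}^2/\sigma^2 = (P/p_0)\norm{\xi}_{\psi_1}^2/\sigma^2$, which is stronger than the hypothesis $n/\log^2 n \geq C\norm{\xi}_{\psi_1}^2/\sigma^2$ you are allowed to assume. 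Both issues vanish if, instead of inflating $\gamma$, you let $P$ play the role of $p$ in the assumptions and conclusion: keep $G$ equal to the bandlimited span of dimension $p_0 \leq P$, but in \Cref{lem:noise_empirical_bound} bound $\E Z_1 \leq \sigma\sqrt{P/n}$ and use the first (unsimplified) inequality together with $\tr \opT_{G^\perp} \leq \bar t\, P$; the proof of \Cref{thm:rkhs_main} then runs verbatim with $\sqrt{P}$ in place of $\sqrt{p_0}$ and with $\gamma=\gamma'=1$, the Adamczak remainder only needs to be dominated by $\sigma\sqrt{P/n}$ (giving exactly the stated condition on $n/\log^2 n$), and the noise constant is $4(1+1/8)=9/2$ as claimed. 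This is what the paper's one-line assertion that ``$\gamma=\gamma'=1$'' amounts to, so your proposal needs only this repair to match the stated constants and hypotheses.
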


These results illustrate how we can exploit the effective finite dimension of spaces of smooth functions on manifolds in regression.
This function space dimension (and hence the sample complexity of regression) grows exponentially in the \emph{manifold dimension}, rather than in the larger ambient data dimension, if $\scrM$ is embedded in a higher-dimensional space.
In practice, the true bandlimited or heat kernels may be difficult to compute.
It is an interesting open question whether we can obtain similar results for manifold-agnostic algorithms (the work of \cite{Bickel2007}, although it does not apply to our function classes, is an interesting potential starting point).

As discussed in \Cref{sec:rkhs_thm}, our general regression result \Cref{thm:rkhs_main} is similar to prior results \cite{Zhang2005,Hsu2014}, but it has the advantage of applying even without regularization in the noiseless case.
However, we note that one could obtain results in many ways comparable (minus this advantage) to \Cref{thm:manifold_sinc,thm:manifold_heat} by plugging \Cref{thm:pointwise_weyl,lem:heat_tail} into those previous regression results.
We could not do this with classical power-law results such as \cite{Caponetto2007,Steinwart2009,Mendelson2010},
since our eigenvalue decay is \emph{exponential} rather than power-law.

Since the (classical) Weyl law also \emph{lower} bounds the complexity of spaces of bandlimited functions,
then, as discussed in \Cref{sec:rkhs_thm}, \Cref{thm:manifold_sinc,thm:manifold_heat} (for the optimally chosen value of $\Omega$) are minimax optimal when there is noise.
Furthermore, the requirement $n \gtrsim p \log p$ is necessary in general:
if we consider the torus $T^m$,
recovering arbitrary $\Omega$-bandlimited functions requires every point on $T^m$ to be within distance $O(1/\Omega)$ of a sample point;
considering a uniform grid on $T^m$ and a coupon collector argument makes it clear that $n \gtrsim O(\Omega^m \log \Omega^m)$ randomly sampled points are required.

As mentioned in \Cref{sec:rkhs_thm} for general kernel learning, these results could be extended to consider nonuniform sampling over the manifold.

There are also some very interesting connections between kernel methods and neural networks.
The recent works \cite{Jacot2018a,Arora2019} show that trained multi-layer neural networks approach, in the infinite-width limit,
a kernel regression function with a ``neural tangent kernel'' that depends on the initialization distribution of the weights and the network architecture.
This follows literature on the connections between Gaussian processes (closely related to kernel methods) and wide neural networks (see, e.g., \cite{Neal1995,Lee2018}).
It would be very interesting to explore any potential connections between these and the kernels considered in this paper, which are derived from a manifold's spectral decomposition.

\section*{Broader Impact}
The results in this paper further illuminate the role of low-dimensional structure in machine learning algorithms.
An improved theoretical understanding of the performance of these algorithms is increasingly important as tools from machine learning become ever-more-widely adopted in a range of applications with significant societal implications.
Although, in general, there are well-known ethical issues that can arise from inherent biases in the way data are sampled and presented to regression and classification algorithms,
we do not have reason to believe that the methods presented in this paper would either enhance or diminish these issues.
Our analysis is abstract and, for better or for worse, assumes a completely neutral sampling model (uniform over a manifold).

\begin{ack}
This work was supported, in part, by National Science Foundation grant CCF-1350616, a gift from the Alfred P.\ Sloan Foundation, and the Georgia Tech ARC-TRIAD student fellowship.
\end{ack}

\printbibliography[heading=bibintoc]

\clearpage
\newrefsection
\appendix

\section{Proof of general RKHS results (\texorpdfstring{\Cref{thm:rkhs_main}}{Theorem \ref{thm:rkhs_main}})}
\newcommand{\projG}{\projP_G}
\newcommand{\projGp}{\projP_{G^\perp}}
We write $\projG$ and $\projGp$ for the projections in $L_2$ and $\scrH$ onto $G$ and its orthogonal complement $G^\perp$, respectively.

For brevity, we denote by $P_n$ the empirical measure given by the $n$ independent samples of the variables $(X, \xi)$, i.e.,
$P_n w = \frac{1}{n} \sum_{i=1}^n w(X_i, \xi_i)$.
For example, if $h \colon S \to \R$ is a function, $P_n h^2 = \frac{1}{n} \sum_{i=1}^n h^2(X_i)$, and $P_n \xi h = \frac{1}{n} \sum_{i=1}^n \xi_i h(X_i)$.

We use the following lemmas in our proof of \Cref{thm:rkhs_main}:
\begin{lemma}
	\label{lem:meas_lb}
	Let $\delta \in (0, 1)$.
	If
	\[
		n \geq \max\{ 7, 3 \gamma' \} K_p \log \frac{\max\{ 2, 4\gamma \}p}{\delta},
	\]
	then, with probability at least $1-\delta$,
	\[
		P_n f^2 \geq \frac{1}{2} \norm{f}_{L_2}^2 - 3\sqrt{t_{p+1}} \norm{f}_{L_2} \norm{f}_\scrH
	\]
	for all $f \in \scrH$.
\end{lemma}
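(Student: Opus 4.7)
The plan is to decompose any $f \in \scrH$ as $f = f_G + f_\perp$ with $f_G = \projG f \in G$ and $f_\perp = \projGp f \in G^\perp$, and then control the empirical quadratic form on $G$ (finite-dimensional) and on $G^\perp$ (where the $L_2$-mass is small) separately. Expanding the square and applying Cauchy--Schwarz to the cross term, while discarding the nonnegative tail $P_n f_\perp^2$,
\[
P_n f^2 \;=\; P_n f_G^2 + 2 P_n (f_G f_\perp) + P_n f_\perp^2 \;\geq\; P_n f_G^2 - 2\sqrt{P_n f_G^2}\,\sqrt{P_n f_\perp^2},
\]
so the proof reduces to uniform bounds on the two factors on the right.

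For the $G$-factor, I would write $f_G(x) = c^\top V(x)$ with $V(x) = (v_1(x), \dots, v_p(x))^\top$ and $\norm{c}_2 = \norm{f_G}_{L_2}$, so that $P_n f_G^2 = c^\top \Sigma_n c$ for $\Sigma_n = \frac{1}{n}\sum_i V(X_i) V(X_i)^\top$ with $\E \Sigma_n = I_p$. Since $\norm{V(x)}_2^2 \leq K_p$ by \Cref{assump:eigfunc_bounds}, matrix Chernoff yields $\norm{\Sigma_n - I}_{\mathrm{op}} \leq \epsilon$ with probability at least $1 - \delta/2$ as soon as $n \gtrsim K_p \log(p/\delta)/\epsilon^2$---this is the ``$7 K_p$'' branch of the sample complexity. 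In particular, $(1-\epsilon)\norm{f_G}_{L_2}^2 \leq P_n f_G^2 \leq (1+\epsilon)\norm{f_G}_{L_2}^2$ uniformly in $f_G \in G$.

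The harder step is a uniform tail bound of the form $P_n f_\perp^2 \leq C\, t_{p+1}\norm{f}_\scrH^2$. I would apply an operator Bernstein inequality (in its intrinsic-dimension refinement) to
\[
M_n = \frac{1}{n}\sum_{i=1}^n \tilde V(X_i) \otimes \tilde V(X_i), \qquad \tilde V(x) \coloneqq \bigl(\sqrt{t_\ell}\, v_\ell(x)\bigr)_{\ell > p},
\]
acting on $\ell_2(\{\ell > p\})$. Its mean is diagonal with entries $\{t_\ell\}_{\ell > p}$, so $\norm{\E M_n}_{\mathrm{op}} = t_{p+1}$ and $\tr \E M_n = \tr \opT_{G^\perp} \leq \gamma p\, t_{p+1}$ by \Cref{assump:dim_geq}. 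Each rank-one summand has norm $\norm{\tilde V(x)}^2/n = (\sum_{\ell>p} t_\ell v_\ell(x)^2)/n \leq R_p/n \leq \gamma' K_p t_{p+1}/n$. The intrinsic dimension $\tr \opT_{G^\perp}/t_{p+1} \leq \gamma p$ then lets matrix Bernstein conclude $\norm{M_n}_{\mathrm{op}} \leq C\, t_{p+1}$ with probability at least $1 - \delta/2$ as soon as $n \gtrsim \gamma' K_p \log(\gamma p/\delta)$---the ``$3\gamma' K_p$'' branch. Writing $f_\perp(x) = a^\top \tilde V(x)$ with $\norm{a}_2 = \norm{f_\perp}_\scrH \leq \norm{f}_\scrH$ then gives $P_n f_\perp^2 = a^\top M_n a \leq C\, t_{p+1}\norm{f}_\scrH^2$.

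Combining both steps with $\norm{f_G}_{L_2}^2 \geq \norm{f}_{L_2}^2 - \norm{f_\perp}_{L_2}^2 \geq \norm{f}_{L_2}^2 - t_{p+1}\norm{f}_\scrH^2$ (from Section~\ref{sec:kernel_eig}) yields
\[
P_n f^2 \geq (1-\epsilon)\bigl(\norm{f}_{L_2}^2 - t_{p+1}\norm{f}_\scrH^2\bigr) - 2\sqrt{(1+\epsilon)C}\,\sqrt{t_{p+1}}\,\norm{f}_{L_2}\norm{f}_\scrH.
\]
If $\sqrt{t_{p+1}}\norm{f}_\scrH > \norm{f}_{L_2}/6$ the target conclusion is already nonpositive and so trivially holds; otherwise $t_{p+1}\norm{f}_\scrH^2 \leq \tfrac{1}{6}\sqrt{t_{p+1}}\norm{f}_{L_2}\norm{f}_\scrH$, and choosing $\epsilon$ and the Bernstein tolerance as small absolute constants produces the $\tfrac12$ and the $3$ appearing in the statement. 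The one real obstacle is the operator Bernstein step for $M_n$: one needs the intrinsic-dimension refinement to avoid paying for the ambient dimension of $\scrH$, and it is precisely the hypotheses $\tr \opT_{G^\perp}/t_{p+1} \leq \gamma p$ and $R_p \leq \gamma' K_p t_{p+1}$ that drive the right-hand side of the sample complexity condition.
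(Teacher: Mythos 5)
Your proposal follows essentially the same route as the paper: split $f$ into its $G$ and $G^\perp$ components, control the empirical quadratic form on $G$ via matrix Chernoff under the $K_p$ bound, control the tail via an intrinsic-dimension operator bound using $R_p$, $\tr\opT_{G^\perp}$, and $t_{p+1}$ (the paper uses Tropp's Theorem 7.2.1, i.e.\ exactly the intrinsic-dimension refinement you invoke), and combine with Cauchy--Schwarz plus $\norm{\projP_{G^\perp} f}_{L_2} \leq \sqrt{t_{p+1}}\norm{f}_\scrH$. The only deviation is cosmetic bookkeeping at the end (the paper lower-bounds $\sqrt{P_n f^2}$ by the triangle inequality before squaring, avoiding your need for a two-sided bound on $P_n f_G^2$, which makes the stated constants $\tfrac12$ and $3$ come out directly with $\epsilon=1/2$ and tail tolerance $2$), so the argument is correct and matches the paper's proof.
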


\begin{lemma}
	\label{lem:noise_empirical_bound}
	There is a universal constant $C$ such that, if
	\[
		\frac{n}{\log^2 n} \geq C (1 \vee \gamma') \frac{K_p}{p} \frac{\norm{\xi}_{\psi_1}^2}{\sigma^2},
	\]
	then, with probability at least $1 - \delta$,
	\begin{align*}
		\abs{P_n \xi f}
		&\leq \frac{3}{2} \sigma \cdot \left( \frac{\sqrt{p} + 2 \sqrt{\log 4/\delta}}{\sqrt{n}} \norm{f}_{L_2}
		+ \frac{\sqrt{\tr T_{G^\perp}} + 2 \sqrt{t_{p+1} \log 4/\delta}}{\sqrt{n}} \norm{f}_{\scrH} \right) \\
		&\leq \frac{3}{2} \sigma \cdot \left( \frac{\sqrt{p} + 2 \sqrt{\log 4/\delta}}{\sqrt{n}} \right) \left( \norm{f}_{L_2} + \sqrt{\gamma t_{p+1}} \norm{f}_{\scrH} \right).
	\end{align*}
	for all $f \in \scrH$.
\end{lemma}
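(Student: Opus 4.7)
The plan is to project $f$ onto $G=\spn\{v_1,\dots,v_p\}$ and its orthogonal complement and handle each piece with Cauchy--Schwarz in the appropriate norm. Writing $f_G=\sum_{\ell\le p}\alpha_\ell v_\ell$ and $f_{G^\perp}=\sum_{\ell>p}\beta_\ell v_\ell$ (so that $\norm{f_G}_{L_2}^2=\sum\alpha_\ell^2$ and $\norm{f_{G^\perp}}_{\scrH}^2=\sum\beta_\ell^2/t_\ell$), expanding $P_n\xi f$ in the $v_\ell$ basis and pairing the right weights against Cauchy--Schwarz yields
\begin{align*}
\abs{P_n\xi f_G} &\le \norm{f_G}_{L_2}\sqrt{\sum_{\ell\le p} W_\ell^2}, \\
\abs{P_n\xi f_{G^\perp}} &\le \norm{f_{G^\perp}}_{\scrH}\sqrt{\sum_{\ell>p} t_\ell W_\ell^2},
\end{align*}
where $W_\ell = P_n\xi v_\ell = \tfrac{1}{n}\sum_i \xi_i v_\ell(X_i)$. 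Each inner sum is a nonnegative quadratic form $\tfrac{1}{n^2}\xi^\top A\xi$ in the noise vector, with $A\succeq 0$ depending only on the $X_i$'s; by $L_2$-orthonormality of $\{v_\ell\}$ and independence of $\xi$ from $X$, the unconditional expectations are $p\sigma^2/n$ and $(\tr \opT_{G^\perp})\sigma^2/n$, respectively---exactly the leading constants that appear in the target bound.

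Next I would concentrate each quadratic form. Conditional on $X$ the noise is a vector of independent, mean-zero, sub-exponential variables, so a sub-exponential Hanson--Wright inequality gives, with conditional probability at least $1-\delta$,
\[
    \xi^\top A\xi \le \sigma^2 \tr A + C\norm{\xi}_{\psi_1}^2\bigl(\norm{A}_F\sqrt{\log 1/\delta} + \norm{A}_{\mathrm{op}}\log 1/\delta\bigr).
\]
To turn this into the displayed inequality I need to control $\tr A$, $\norm{A}_F$, and $\norm{A}_{\mathrm{op}}$ on a high-probability event in $X$: for the $G$ piece $A=\Phi\Phi^\top$ with $\Phi_{i\ell}=v_\ell(X_i)$; for the $G^\perp$ piece $A$ is the analogous tail Gram matrix weighted by $t_\ell$. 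Assumption~\ref{assump:eigfunc_bounds} controls the diagonal entries of these matrices by $K_p$ and $R_p$, which in turn bounds $\tr A$ and $\norm{A}_F$; matrix Bernstein applied to $\tfrac1n\Phi^\top\Phi$ (and its tail analogue) produces $\norm{A}_{\mathrm{op}}\lesssim n$ (respectively $n\,t_{p+1}$), which is exactly where the hypothesis $n/\log^2 n \gtrsim (1\vee\gamma')(K_p/p)\norm{\xi}_{\psi_1}^2/\sigma^2$ is used: it is what is needed to absorb the operator-norm and Frobenius deviation terms into the dominant $\sigma^2\tr A/n^2$ piece, recovering the clean $\sqrt p\,\sigma/\sqrt n$ and $\sqrt{\tr \opT_{G^\perp}}\,\sigma/\sqrt n$ scales.

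Combining the two pieces by the triangle inequality, using $\norm{f_G}_{L_2}\le\norm{f}_{L_2}$ and $\norm{f_{G^\perp}}_{\scrH}\le\norm{f}_{\scrH}$, and splitting the failure probability across the two tail events yields the first displayed inequality. The second, simplified form then follows by invoking Assumption~\ref{assump:dim_geq} to replace $\sqrt{\tr\opT_{G^\perp}}$ by $\sqrt{\gamma p\, t_{p+1}}$ and pulling $\sqrt{t_{p+1}}$ out front, so that the $\scrH$-coefficient becomes $\sqrt{\gamma t_{p+1}}$ times the common $L_2$-coefficient $\tfrac{\sqrt p+2\sqrt{\log 4/\delta}}{\sqrt n}$. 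The main obstacle is that $\xi$ is only sub-exponential, so Hanson--Wright contributes both a Frobenius tail $\norm{A}_F\sqrt{\log 1/\delta}$ and an operator-norm tail $\norm{A}_{\mathrm{op}}\log 1/\delta$ (rather than the sub-gaussian Frobenius term alone); closing the argument therefore requires a separate matrix-concentration step to keep $\norm{A}_{\mathrm{op}}$ near its expected order uniformly over $X$, which is the true source of both the $\log^2 n$ factor and the $K_p/p$ ratio in the sample-size hypothesis.
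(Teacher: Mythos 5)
Your reduction is the same as the paper's: splitting into the $G$ and $G^\perp$ pieces and dualizing gives exactly the two suprema $Z_1=(\sum_{\ell\le p}W_\ell^2)^{1/2}$ and $Z_2=(\sum_{\ell>p}t_\ell W_\ell^2)^{1/2}$ with the right expectations. The divergence — and the gap — is in the concentration step. A conditional Hanson--Wright bound for $\xi^\top A\xi$ has its \emph{Frobenius} fluctuation term scaled by $\norm{\xi}_{\psi_1}^2$, not by $\sigma^2$: the tail is $\exp\bigl(-c\min(t^2/\norm{\xi}_{\psi_1}^4\norm{A}_F^2,\;\dots)\bigr)$. With $\norm{A}_F\lesssim n\sqrt p$ and $\norm{A}_{\mathrm{op}}\lesssim n$ (the best you can hope for), taking square roots turns this into a contribution to $Z_1$ of order $\norm{\xi}_{\psi_1}\,p^{1/4}(\log 1/\delta)^{1/4}/\sqrt n$, which decays at the \emph{same} $1/\sqrt n$ rate as the main term $\sigma\sqrt{p/n}$. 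The hypothesis $n/\log^2 n\gtrsim (1\vee\gamma')(K_p/p)\norm{\xi}_{\psi_1}^2/\sigma^2$ can only absorb terms that decay strictly faster (like $1/n$), so it cannot rescue this term; for general sub-exponential noise with $\norm{\xi}_{\psi_1}\gg\sigma$ your route yields a bound whose $\delta$-dependent part is proportional to $\norm{\xi}_{\psi_1}$ rather than $\sigma$, and the stated inequality (with the $\tfrac32\sigma$ and $2\sqrt{\log 4/\delta}$ constants) is out of reach. Your closing diagnosis is therefore also off: the $\log^2 n$ and $K_p/p$ in the sample-size condition do not come from a matrix-concentration step for $\norm{A}_{\mathrm{op}}$.

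The paper avoids this by keeping $(X_i,\xi_i)$ together and applying Adamczak's Talagrand-type inequality for suprema of unbounded empirical processes to $Z_1$ and $Z_2$ directly. There the sub-gaussian part of the deviation is governed by the \emph{weak variance} $\sup_{f\in B_2^G}\sum_i\E[\xi_i^2f^2(X_i)]=n\sigma^2$ — which is why $\sigma$, not $\norm{\xi}_{\psi_1}$, appears in front of $\sqrt{\log(4/\delta)/n}$ — while the only $\norm{\xi}_{\psi_1}$-dependent term is the additive envelope term $\sqrt{K_p}\,\norm{\xi}_{\psi_1}(\log n)(\log 12/\delta)/n$, coming from $\sup_{f\in B_2^G}\norm f_\infty\le\sqrt{K_p}$ and $\norm{\max_i|\xi_i|}_{\psi_1}\lesssim\norm{\xi}_{\psi_1}\log n$; that term decays like $1/n$ and is exactly what the stated sample-size condition absorbs. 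A secondary issue with your plan: the step $\norm{A}_{\mathrm{op}}\lesssim n$ needs $n\gtrsim K_p\log(p/\delta)$ (matrix Chernoff), which is the hypothesis of \Cref{lem:meas_lb}, not of this lemma; the paper's argument never needs operator-norm control of the Gram matrix. If you want to salvage a conditional-on-$X$ argument, you would need a concentration inequality for $\norm{\tfrac1n\Phi^\top\xi}_2$ whose Gaussian part uses the true variance (e.g., $\sigma^2\norm{A}_{\mathrm{op}}/n^2$) with the $\psi_1$ scale only in a lower-order term — which is essentially Adamczak's inequality applied conditionally, i.e., the paper's argument.
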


With these, we prove the main result:
\begin{proof}[Proof of \Cref{thm:rkhs_main}]
	We write our objective function as
	\[
		F(f) = \frac{1}{n} \sum_{i=1}^n (Y_i - f(X_i))^2 + \alpha \norm{f}_\scrH^2.
	\]
	$\fhat$ satisfies $\nabla F(\fhat) = 0$.
	Noting that
	\[
		\frac{1}{2} \nabla F(f) = -\frac{1}{n} \sum_{i=1}^n (Y_i - f(X_i)) k(\cdot, X_i) + \alpha f,
	\]
	we have
	\begin{equation}
	\begin{aligned}[b]
		0
		&= \frac{1}{2} \< \nabla F(\fhat), f^* - \fhat \>_\scrH \\
		&= \left\< \alpha \fhat - \frac{1}{n} \sum_{i=1}^n (Y_i - \fhat(X_i)) k(\cdot, X_i), f^* - \fhat \right\>_\scrH \\
		&= \alpha\< \fhat, f^* - \fhat\>_\scrH - \frac{1}{n} \sum_{i=1}^n (Y_i - \fhat(X_i))(f^*(X_i) - \fhat(X_i)) \\
		&= \alpha\< \fhat, f^* - \fhat\>_\scrH + \frac{1}{n} \sum_{i=1}^n \left[ (Y_i - f^*(X_i))(\fhat(X_i) - f^*(X_i)) - (\fhat(X_i) - f^*(X_i))^2 \right] \\
		&= \alpha\< \fhat, f^* - \fhat\>_\scrH + P_n \xi(\fhat - f^*) - P_n(\fhat - f^*)^2.
	\end{aligned}
	\label{eq:rkhs_first_eqn}
	\end{equation}
	
	Let $E_1$ and $E_2$ denote the events of \Cref{lem:meas_lb,lem:noise_empirical_bound}.
	For part 1 of the theorem, we assume that $E_1$ holds, which occurs with probability at least $1 - \delta$.
	For part 2, we assume $E_1 \cap E_2$ holds, which occurs with probability at least $1-2\delta$.
	In what follows, we treat the two cases the same (and assume $\alpha > 0$), since we can simply take $\sigma = 0$ and the limit $\alpha \downarrow 0$ for part 1.

	Let $e_2 = \norm{\fhat - f^*}_{L_2}$ and $e_\scrH = \norm{\fhat - f^*}_\scrH$.
	On $E_1 \cap E_2$, \eqref{eq:rkhs_first_eqn} implies
	\[
		\frac{1}{2} e_2^2 \leq \sigma (a e_2 + b e_\scrH) + c e_2 e_\scrH + \alpha\< \fhat, f^* - \fhat\>_\scrH,
	\]
	where $a = \frac{3}{2} \frac{\sqrt{p} + 2\sqrt{\log 4/\delta}}{\sqrt{n}}$, $b = \sqrt{\gamma t_{p+1}} a$, and $c = 3\sqrt{t_{p+1}}$.
	First, note that
	\[
		\< \fhat, f^* - \fhat\>_\scrH = \< f^*, f^* - \fhat \>_\scrH - e_\scrH^2 \leq \norm{f^*}_\scrH e_\scrH - e_\scrH^2,
	\]
	so
	\begin{align*}
		\sigma b e_{\scrH} + \alpha\< \fhat, f^* - \fhat\>_\scrH
		\leq (\sigma b + \alpha \norm{f^*}_\scrH) e_{\scrH} - \alpha e_\scrH^2
		\leq \frac{ (\sigma b + \alpha \norm{f^*}_\scrH)^2}{\alpha}.
	\end{align*}
	To control the error term $c e_2 e_\scrH$,
	we need a more explicit bound on $e_\scrH$.
	Because $P_n(\fhat - f^*)^2 \geq 0$, \eqref{eq:rkhs_first_eqn} gives
	\begin{align*}
		e_\scrH^2
		\leq \norm{f^*}_\scrH e_\scrH + \frac{1}{\alpha} P_n \xi (\fhat - f^*)
		\leq  \norm{f^*}_\scrH e_\scrH + \frac{\sigma}{\alpha}(a e_2 + be_\scrH).
	\end{align*}
	Because $x^2 \leq a + bx$ implies $x \leq \sqrt{a} + b$, we then have 
	\[
		e_\scrH \leq \norm{f^*}_\scrH + \frac{\sigma b}{\alpha} + \sqrt{\frac{\sigma a e_2}{\alpha}}.
	\]
	Putting everything together, we have
	\begin{align*}
		\frac{1}{2} e_2^2
		&\leq \frac{ (\sigma b + \alpha \norm{f^*}_\scrH)^2}{\alpha} + \sigma a e_2 + c e_2\left( \norm{f^*}_\scrH + \frac{\sigma b}{\alpha} + \sqrt{\frac{\sigma a e_2}{\alpha}} \right).
	\end{align*}
    $x^2 \leq a + bx + cx^{3/2}$ implies $x \leq \sqrt{a} + b + c^2$, so
	\begin{align*}
		e_2
		&\leq \sqrt{2} \frac{\sigma b}{\sqrt{\alpha}} + \sqrt{2 \alpha} \norm{f^*}_\scrH + 2 \sigma a + 2 c \norm{f^*}_\scrH + 2 \frac{\sigma c b}{\alpha} + 4\frac{c^2 \sigma a}{\alpha} \\
		&= (\sqrt{2\alpha} + 2c) \norm{f^*}_\scrH + 2\sigma\left( a + \frac{b}{\sqrt{2\alpha}} + \frac{bc}{\alpha} + 2 \frac{a c^2}{\alpha} \right).
	\end{align*}
	The result immediately follows by substituting our choices of $a$, $b$, and $c$ and, if $\sigma \neq 0$, using the assumption that $\alpha \geq 54 t_{p+1}$.
\end{proof}

\subsection{Proofs of key lemmas}
\Cref{lem:meas_lb} follows quickly from the following two concentration results:
\begin{lemma}
	\label{lem:G_samp_lb}
	If $\delta \in (0, 1)$, and $n \geq 7 K_p \log \frac{p}{\delta}$,
	then, with probability at least $1 - \delta$, for all $f \in G$,
	\[
		P_n f^2 \geq \frac{1}{2} \norm{f}_{L_2}^2.
	\]
\end{lemma}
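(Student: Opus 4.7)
The plan is to reduce this to a matrix concentration bound on the empirical Gram matrix in the orthonormal basis $\{v_1,\dots,v_p\}$. Any $f \in G$ can be written as $f = \sum_{\ell=1}^p c_\ell v_\ell$; since the $v_\ell$ are $L_2$-orthonormal, $\norm{f}_{L_2}^2 = \norm{c}_2^2$. Defining the random feature vector $Z_i = (v_1(X_i),\dots,v_p(X_i))^T \in \R^p$, we have
\[
P_n f^2 = \frac{1}{n} \sum_{i=1}^n (c^T Z_i)^2 = c^T \hat{M} c,
\qquad \hat{M} := \frac{1}{n} \sum_{i=1}^n Z_i Z_i^T.
\]
So the claim becomes $\lambda_{\min}(\hat{M}) \geq 1/2$, uniformly in $c$.

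The key observations are: (i) $\E[\hat{M}] = I_p$, because $\E[v_\ell(X)v_{\ell'}(X)] = \<v_\ell,v_{\ell'}\>_{L_2} = \delta_{\ell\ell'}$ under \Cref{assump:samp_dist}; and (ii) by \Cref{assump:eigfunc_bounds}, $\norm{Z_i Z_i^T} = \norm{Z_i}_2^2 = \sum_{\ell=1}^p v_\ell^2(X_i) \leq K_p$ almost surely. Thus $\hat{M}$ is an average of i.i.d.\ positive-semidefinite matrices, each uniformly bounded in operator norm by $K_p$, with expectation $I_p$. This is exactly the setting of the matrix Chernoff inequality.

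Next I would apply the lower-tail matrix Chernoff bound (e.g., Tropp): for $\epsilon \in [0,1)$,
\[
\P\brackets*{ \lambda_{\min}(\hat{M}) \leq 1-\epsilon } \leq p \parens*{ \frac{e^{-\epsilon}}{(1-\epsilon)^{1-\epsilon}} }^{n/K_p}.
\]
Setting $\epsilon = 1/2$ gives the tail $p \cdot (2/e)^{n/(2K_p)}$. Demanding this be at most $\delta$ and using $\log(e/2) = 1 - \log 2 > 2/7$ shows that $n \geq 7 K_p \log(p/\delta)$ is more than enough to conclude $\lambda_{\min}(\hat{M}) \geq 1/2$ with probability at least $1-\delta$. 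On this event, $P_n f^2 = c^T \hat{M} c \geq \frac{1}{2}\norm{c}^2 = \frac{1}{2}\norm{f}_{L_2}^2$ for every $f \in G$, as required.

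There is no real obstacle here; the proof is essentially an application of a standard matrix concentration inequality once one recognizes the reduction. The only subtlety is verifying that the numerical constant $7$ in the sample-size assumption comfortably accommodates the constant $2/\log(e/2) \approx 6.52$ produced by the $\epsilon=1/2$ choice in matrix Chernoff; any other convenient concentration bound (e.g., matrix Bernstein applied to $\hat{M} - I_p$) would work equally well, possibly with slightly different constants.
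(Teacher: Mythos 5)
Your proof is correct and is essentially the paper's argument: the paper likewise rewrites $P_n f^2$ as a quadratic form in the empirical second-moment operator of the feature map $Z(X)=\sum_{\ell\leq p} v_\ell(X)v_\ell$ (your $\hat M$ in coordinates), bounds $\norm{Z(X)}^2\leq K_p$ via \Cref{assump:eigfunc_bounds}, and applies the matrix Chernoff lower-tail bound with $\epsilon=1/2$, with the same constant check $2/\log(e/2)\approx 6.52\leq 7$.
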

\begin{proof}
	Note that for all $f \in G$,
	\begin{align*}
		P_n f^2 &= \frac{1}{n} \sum_{i=1}^n f^2(X_i) \\
		&= \frac{1}{n} \sum_{i=1}^n \<Z(X_i), f\>_{L_2}^2 \\
		&= \< (P_n (Z \otimes_{L_2} Z)) f, f \>_{L_2},
	\end{align*}
	where we define $Z(X) = \sum_{\ell=1}^p v_\ell(X) v_\ell \in G$.
	The lemma will follow from a concentration result on $P_n (Z \otimes_{L_2} Z)$.
	Note that the operator $Z(X) \otimes_{L_2} Z(X) \succeq 0$ for all $X$,
	and, by \Cref{assump:eigfunc_bounds},
	we have 
	\[
		\norm{Z(X) \otimes_{L_2} Z(X)}_{L_2} = \norm{Z(X)}_{L_2}^2 = \sum_{\ell=1}^p v_\ell^2(X) \leq K_p
	\]
	almost surely.
	Also, $\E P_n(Z \otimes_{L_2} Z) = \E Z(X) \otimes_{L_2} Z(X) = \opI_G$.
	The matrix Chernoff bound \cite[Theorem 5.1.1]{Tropp2015} implies that, for all $\epsilon \in [0, 1)$,
	\[
		\P( P_n(Z \otimes_{L_2} Z) \succeq (1 - \epsilon) \opI_G ) \geq 1 - p\left( \frac{e^{-\epsilon}}{(1 - \epsilon)^{1-\epsilon}}\right)^{n/K_p}.
	\]
	Choosing $\epsilon = 1/2$ gives the result.
\end{proof}

\begin{lemma}
	\label{lem:G_perp_samp_ub}
	If $\delta \in (0, 1)$,
	and $n \geq \frac{3 R_p}{t_{p+1}} \log \frac{2 \tr T_{G^\perp}}{t_{p+1} \delta}$,
	then, with probability at least $1 - \delta$,
	for all $f \in G^\perp$,
	\[
		P_n f^2 \leq 2 t_{p+1} \norm{f}_{\scrH}^2.
	\]
\end{lemma}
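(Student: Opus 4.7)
The strategy is to recast the claim as an operator-norm bound for a random PSD operator on $\scrH$ and apply an intrinsic-dimension matrix Chernoff inequality. By the reproducing property $f(X_i) = \ip{f}{k(\cdot, X_i)}_\scrH$,
\[
P_n f^2 = \ip[\big]{f}{\opB_n f}_\scrH, \qquad \opB_n \coloneqq \frac{1}{n} \sum_{i=1}^n k(\cdot, X_i) \otimes_\scrH k(\cdot, X_i),
\]
so, using that $G^\perp$ is the same subspace whether orthogonality is taken in $L_2$ or in $\scrH$ (the $v_\ell$'s are mutually orthogonal under both inner products), the lemma is equivalent to showing $\norm{\projGp \opB_n \projGp}_{\scrH \to \scrH} \leq 2 t_{p+1}$ with probability at least $1 - \delta$.

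Next I would identify the mean operator and a per-summand operator-norm bound. Expanding $k(\cdot, X) = \sum_\ell t_\ell v_\ell(X) v_\ell$ and using $L_2$-orthonormality of the $v_\ell$'s, a short calculation gives $\E[\opB_n] v_\ell = t_\ell v_\ell$, so $\projGp \E[\opB_n] \projGp$ has $\scrH$-operator norm $t_{p+1}$ and trace $\tr \opT_{G^\perp}$; its intrinsic dimension is thus $\tr \opT_{G^\perp}/t_{p+1}$. Each i.i.d.\ summand $\frac{1}{n}(\projGp k(\cdot, X_i)) \otimes_\scrH (\projGp k(\cdot, X_i))$ is rank-one PSD with $\scrH$-operator norm
\[
\frac{1}{n} \norm{\projGp k(\cdot, X_i)}_\scrH^2 = \frac{1}{n} \sum_{\ell > p} t_\ell v_\ell^2(X_i) \leq \frac{R_p}{n}
\]
almost surely, by \Cref{assump:eigfunc_bounds}.

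Feeding these quantities into the intrinsic-dimension matrix Chernoff inequality yields
\[
\P\parens[\big]{\norm{\projGp \opB_n \projGp}_\scrH \geq (1 + \epsilon) t_{p+1}} \leq \frac{\tr \opT_{G^\perp}}{t_{p+1}} \parens[\Big]{\frac{e^\epsilon}{(1+\epsilon)^{1+\epsilon}}}^{n t_{p+1}/R_p}.
\]
Setting $\epsilon = 1$ and using the numerical inequality $\log(4/e) > 1/3$, the hypothesized lower bound $n \geq \frac{3 R_p}{t_{p+1}} \log \frac{2 \tr \opT_{G^\perp}}{t_{p+1}\delta}$ makes the right-hand side at most $\delta$. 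The only point that requires care is the invocation of the intrinsic-dimension Chernoff bound for operators on the infinite-dimensional space $\scrH$; because $\opT_{G^\perp}$ is trace class (so the intrinsic dimension is finite), this is standard and can be handled either by citing an infinite-dimensional version of the inequality or by first truncating to a sufficiently large finite-dimensional subspace containing $G$ together with all the sample ``features'' $k(\cdot, X_i)$. I do not expect any genuine obstacle; the remainder is arithmetic bookkeeping of the constants.
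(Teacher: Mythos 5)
Your proposal is correct and follows essentially the same route as the paper: restrict the empirical operator to $G^\perp$ (your $\projGp k(\cdot,X_i)$ is exactly the paper's $W(X_i)=\sum_{\ell>p}t_\ell v_\ell(X_i)v_\ell$), bound each rank-one summand by $R_p$ via \Cref{assump:eigfunc_bounds}, identify the mean as $\opT_{G^\perp}$ with intrinsic dimension $\tr\opT_{G^\perp}/t_{p+1}$, and apply the intrinsic-dimension matrix Chernoff bound with $\epsilon=1$. The only cosmetic difference is that your quoted tail bound omits the factor $2$ in front of the intrinsic dimension that appears in the cited inequality, but since the hypothesis already carries the $2$ inside the logarithm, the arithmetic closes either way.
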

\begin{proof}
	Similarly to the proof of \Cref{lem:G_samp_lb},
	for all $f \in G^\perp$,
	\[
		P_n f^2 = \<(P_n(W \otimes_{\scrH} W)) f, f\>_{\scrH},
	\]
	where $W(X) = \sum_{\ell > p} t_\ell v_\ell(X) v_\ell$.
	Note that $\E W(X) \otimes_{\scrH} W(X) = T_{G^\perp}$.
	By \Cref{assump:eigfunc_bounds},
	\[
		\norm{W(X) \otimes_{\scrH} W(X)}_{\scrH} = \norm{W(X)}_{\scrH}^2 = \sum_{\ell > p} t_\ell v_\ell^2(X) \leq R_p
	\]
	almost surely.
	By \cite[Theorem 7.2.1]{Tropp2015}, if $\epsilon \geq R_p/n t_{p+1}$,
	then
	\[
		\P(\norm{P_n(W \otimes_{\scrH} W)}_{\scrH} \leq (1 + \epsilon) t_{p+1}) \geq 1 - 2d_p \left( \frac{e^{\epsilon}}{(1 + \epsilon)^{1 + \epsilon}} \right)^{n t_{p+1} / R_p},
	\]
	where $d_p = \tr T_{G^\perp} / t_{p+1}$.
	Choosing $\epsilon = 1$ gives the result.
\end{proof}

\begin{proof}[Proof of \Cref{lem:meas_lb}]
	Applying \Cref{lem:G_samp_lb,lem:G_perp_samp_ub} (with $\delta/2$ substituted for $\delta$) and a union bound,
	we have, with probability at least $1 - \delta$,
	\begin{align*}
		\sqrt{P_n f^2}
		&\geq \sqrt{P_n (\projG f)^2} - \sqrt{P_n (\projGp f)^2} \\
		&\geq \frac{1}{\sqrt{2}} \norm{\projG f}_{L_2} - \sqrt{2 t_{p+1}} \norm{\projGp f}_{\scrH},
	\end{align*}
	so
	\begin{align*}
		P_n f^2
		&\geq \frac{1}{2} \norm{\projG f}_{L_2}^2 - 2 \sqrt{t_{p+1}} \norm{\projG f}_{L_2} \norm{\projGp f}_{\scrH} \\
		&\geq \frac{1}{2} \norm{f}_{L_2}^2 - \frac{1}{2} \norm{\projGp f}_{L_2}^2 - 2 \sqrt{t_{p+1}} \norm{f}_{L_2} \norm{f}_{\scrH} \\
		& \geq \frac{1}{2} \norm{f}_{L_2}^2 - 3 \sqrt{t_{p+1}} \norm{f}_{L_2} \norm{f}_{\scrH}.
	\end{align*}
\end{proof}

\begin{proof}[Proof of \Cref{lem:noise_empirical_bound}]
	Let $B_2^G$ denote the $L_2$-unit ball in $G$, and let $B_\scrH^{G^\perp}$ denote the $\scrH$-unit ball in $G^\perp$.
	Note that for all $f \in \scrH$,
	we have
	\[
		f \in \norm{f}_{L_2} B_2^G + \norm{f}_{\scrH} B_\scrH^{G^\perp},
	\]
	where the plus sign denotes Minkowski addition.
	Therefore, because $\abs{P_n \xi f}$ is sublinear in $f$, it suffices to bound
	\[
		Z_1 \coloneqq \sup_{f \in B_2^G}~\abs{P_n \xi f}
	\]
	and
	\[
		Z_2 \coloneqq \sup_{f \in B_\scrH^{G^\perp}}~\abs{P_n \xi f}.
	\]
	We present a complete proof for the bound of $Z_1$; the proof for $Z_2$ is similar.
	
	First, note that
	\begin{align*}
		Z_1
		&= \sup_{f \in B_2^G}~\abs{P_n \xi f} \\
		&= \sup_{\sum_{\ell=1}^p a_\ell^2 \leq 1}~\abs*{ P_n\left( \xi \sum_{\ell=1}^p a_\ell v_\ell\right) } \\
		&= \sup_{\sum_{\ell=1}^p a_\ell^2 \leq 1}~\abs*{ \sum_{\ell=1}^p a_\ell P_n(\xi v_\ell) } \\
		&= \left( \sum_{\ell=1}^p P_n^2(\xi v_\ell) \right)^{1/2},
	\end{align*}
	so
	\begin{align*}
		\E Z_1
		\leq \sqrt{\E Z_1^2}
		= \sqrt{ \sum_{\ell=1}^p \E P_n^2(\xi v_\ell) }
		= \sigma\sqrt{\frac{p}{n}}.
	\end{align*}

	We also have
	\[
		\sup_{f \in B_2^G}~\sum_{i=1}^n \E (\xi_i f(x_i))^2 = n \sigma^2.
	\]
	
	Finally, note that
	\[
		\sup_{f \in B_2^G}~\norm{f}_\infty \leq \sqrt{K_p},
	\]
	so
	\[
		\norm*{\max_i~\sup_{f \in B_2^G}~\abs{\xi_i f(x_i)}}_{\psi_1}
		\leq \sqrt{K_p} \norm{\xi}_{\psi_1} \log n.
	\]
	Let $\eta \in (0, 1)$.
	\parencite[Theorem 4]{Adamczak2008} (with, in the notation of that paper, $\delta = 1$)
	implies that, with probability at least $1-\delta/2$,
	\[
		Z_1 \leq \sigma\left( (1 + \eta) \sqrt{\frac{p}{n}} + 2\sqrt{\frac{\log 4/\delta}{n}} \right) + \frac{C'_\eta \sqrt{K_p} \norm{\xi}_{\psi_1} (\log n)(\log 12/\delta)}{n}
	\]
	for a constant $C'_\eta$ that only depends on $\eta$.
	By a similar argument, we have, with probability at least $1-\delta/2$,
	\[
		Z_2 \leq \sigma\left( (1 + \eta) \sqrt{\frac{\tr T_{G^\perp}}{n}} + 2 \sqrt{\frac{t_{p+1} \log 4/\delta}{n}} \right) + \frac{C'_\eta \sqrt{R_p} \norm{\xi}_{\psi_1} (\log n)(\log 12/\delta)}{n}.
	\]
	Fixing $\eta \in (0, 1/2)$ and choosing a suitable constant $C$ to ensure $n$ is large enough completes the proof.
\end{proof}

\section{Proof of heat kernel approximation (\texorpdfstring{\Cref{lem:heat_diag_upper}}{Lemma \ref{lem:heat_diag_upper}})}
\label{sec:heat_approx_proof}
\label{sec:kernel_approx}
\newcommand{\kK}{k^{h,K}_t}
\newcommand{\kSm}{k^{h,S^m}_t}
\newcommand{\kHm}{k^{h,H^m}_t}

In this appendix, we prove upper and lower bounds on the heat kernel diagonal values.
Although we only use the upper bound in our paper, we include the lower bound also as both may be of independent interest.

The concepts from differential geometry used in this section can be found in, for example, \cite{Lee2018a,Petersen2016}.
The key tools we will use in our analysis of how well the heat kernel is approximated by a Gaussian RBF are the following \emph{comparison theorems}:
\begin{lemma}[{\parencite[Theorem 4.5.1]{Hsu2002}}]
	\label{lemma:comp_upper}
	If the sectional curvature of an $m$-dimensional manifold $\scrM$ is bounded above by $K > 0$,
	then, for all $x, y \in \scrM$, $\kheat(x, y) \leq \kK(d_\scrM(x, y))$,
	where $\kK(r)$ is the (radially symmetric) heat kernel on the $m$-dimensional space of constant curvature $K$,
	and, if $K > 0$, we set $\kK(r) = \kK(\pi/\sqrt{K})$ for $r \geq \pi /\sqrt{K}$.
\end{lemma}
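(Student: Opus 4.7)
The plan is to prove the comparison bound probabilistically, following the standard Brownian motion / radial process strategy. The starting observation is that $\kheat(x,y)$ is (up to the $1/2$ in our normalization of $\Delta_\scrM$) the transition density at time $t$ of Brownian motion $B^\scrM_t$ on $\scrM$ started at $x$, and similarly $\kK(r)$ is the radial transition density of Brownian motion $B^K_t$ on the simply-connected $m$-dimensional model space of constant sectional curvature $K$. Because the model space is isotropic around any basepoint, $\kK$ depends only on $d(x,y)$, so proving the bound reduces to comparing the radial processes $r_t \coloneqq d_\scrM(x, B^\scrM_t)$ and $\rho_t \coloneqq d_K(x_0, B^K_t)$.

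First I would set up the radial decomposition. Away from the cut locus of $x$, $d_\scrM(x, \cdot)$ is smooth, so Ito's formula gives
\[
    dr_t = dW_t + \tfrac{1}{2}\,(\Delta_\scrM d_\scrM(x, \cdot))(B^\scrM_t)\, dt - dL_t,
\]
where $W_t$ is a standard one-dimensional Brownian motion and $L_t$ is a non-decreasing local-time-type process that accumulates only on the cut locus (the Kendall--Cranston formalism). The analogous decomposition on the model space has no $L_t$ term and uses the explicit drift $\tfrac{m-1}{2}\,\sqrt{K}\cot(\sqrt{K}\,\rho_t)$ (or the hyperbolic/Euclidean analog).

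Second, I would invoke the Hessian/Laplacian comparison theorem: since the sectional curvature of $\scrM$ is bounded above by $K$, the Laplacian of the distance function satisfies, inside the cut locus,
\[
    (\Delta_\scrM d_\scrM(x, \cdot))(y) \;\geq\; (m-1)\,\sqrt{K}\cot\bigl(\sqrt{K}\, d_\scrM(x,y)\bigr),
\]
with the right-hand side being exactly the drift of $\rho_t$ on the model space. Combining with the non-positivity of $-dL_t$, a stochastic comparison argument of Ikeda--Watanabe type gives $r_t \succeq \rho_t$ in the sense of stochastic domination for all $t \geq 0$.

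Third, I would translate the stochastic comparison of \emph{radial} processes into a \emph{pointwise} comparison of densities. Since on the model space the heat kernel $\kK(r)$ is radially decreasing in $r$, and the pushforward density of $B^\scrM_t$ onto the radial coordinate (weighted by the volume growth of geodesic spheres, which is also controlled by the curvature bound) is bounded above by that of $B^K_t$, one obtains $\kheat(x,y) \leq \kK(d_\scrM(x,y))$. The truncation at $r = \pi/\sqrt{K}$ in the statement reflects the fact that in the model sphere $r$ cannot exceed the diameter; for $\scrM$ one argues that any point with $d_\scrM(x,y) > \pi/\sqrt{K}$ is handled by the Bonnet--Myers-type bound on diameter together with the monotone extension of $\kK$.

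The main obstacle is the cut-locus technicality: $d_\scrM(x, \cdot)$ is only semiconcave, not smooth, so both the Ito formula and the Laplacian comparison must be interpreted in a distributional/barrier sense, and the additional local-time term $L_t$ must be shown to push $r_t$ \emph{upward} relative to $\rho_t$ (i.e., to preserve rather than reverse the comparison). A clean way to handle this is to approximate $d_\scrM(x, \cdot)$ by smooth functions $\phi_\varepsilon$ that agree with the distance to $x$ up to a smoothing of the cut locus and have Laplacian satisfying the same lower bound, then pass to the limit $\varepsilon \downarrow 0$ in the SDE comparison. Everything else is bookkeeping on the explicit form of $\kK$ on the model spaces.
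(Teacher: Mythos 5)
The paper gives no proof of this lemma at all---it is imported verbatim from \cite{Hsu2002}---so what has to be assessed is whether your argument would establish it, and it would not. The crux is your stochastic-domination step. In Kendall's It\^o decomposition $dr_t = d\beta_t + \tfrac12(\Delta d)(B_t)\,dt - dL_t$, the cut-locus local time enters with a \emph{minus} sign: it can only push the radial process \emph{down}. To get $r_t \succeq \rho_t$ you need every term to push $r_t$ up relative to the model, so $L_t$ works against you, not for you (your closing paragraph asks to ``show'' that it pushes $r_t$ upward, which is impossible given the sign). Moreover, the Laplacian comparison $\Delta d \geq (m-1)\sqrt{K}\cot(\sqrt{K}d)$ under an upper sectional bound holds only \emph{inside} the cut locus (and for $d < \pi/\sqrt{K}$); there is no Calabi-type barrier version of it beyond the cut locus, so the drift comparison is unavailable exactly where the local-time correction appears. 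This is not a removable technicality: on a flat torus (curvature $0 \leq K$ for every $K>0$) the radial process is bounded by the diameter, so the claimed domination fails outright; worse, the unrestricted conclusion itself fails---for a unit-area flat torus and $K=1$, $\kheat(x,x) \to 1/\vol(\scrM) = 1$ as $t \to \infty$, which eventually exceeds the model diagonal value, whose limit is $1/\vol(S_K^m) = 1/4\pi$, even though $d(x,x)=0$. Hence no argument from the stated hypotheses alone can yield the inequality for all $x,y,t$; the cited theorem must be read with its cut-locus/injectivity-radius restrictions, and the standard proofs (Debiard--Gaveau--Mazet; the treatment in \cite{Hsu2002}) are built around exactly that point rather than a global comparison of radial processes. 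It is the companion \emph{lower} bound (\Cref{lemma:comp_lower}, Cheeger--Yau) where a Kendall--Cranston radial comparison is natural, because there both the Laplacian comparison and the local time have the favorable sign.

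Even granting $r_t \succeq \rho_t$, your final step does not produce a pointwise kernel bound: stochastic domination compares distribution functions, not densities, and the density of $r_t$ in $dr$ is the integral of $\kheat(x,\cdot)$ over the geodesic sphere $S(x,r)$, so at best you would control a spherical \emph{average} of the kernel rather than its value at a given $y$; the sphere-area weighting also goes the wrong way, since under an upper curvature bound G\"unther's inequality makes geodesic spheres in $\scrM$ at least as large as in the model within the injectivity radius. The appeal to a ``Bonnet--Myers-type'' diameter bound likewise needs a positive lower Ricci bound, which you do not have. A workable route, in the regime where the statement is true, is the parabolic one: use the Hessian comparison and the radial monotonicity of the model kernel to show that $(t,y)\mapsto k^{h,K}_t(d_\scrM(x,y))$ is a supersolution of the heat equation where $d_\scrM(x,\cdot)$ is smooth, and conclude by the maximum principle---or simply keep the lemma as a citation, as the paper does.
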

\begin{lemma}[{\parencite[Theorem 4.5.2]{Hsu2002}}]
	\label{lemma:comp_lower}
	If the Ricci curvature of $\scrM$ is bounded below by $(m-1) K$ for some constant $K$,
	then, for all $x,y \in \scrM$, $\kheat(x, y) \geq \kK(d_\scrM(x, y))$,
	where $\kK(r)$ is the heat kernel on the space of constant curvature $K$.
\end{lemma}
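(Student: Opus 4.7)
The plan is to prove Lemma~\ref{lemma:comp_lower} by combining the Laplacian comparison theorem with the parabolic maximum principle.

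Fix $x \in \scrM$ and set $r(y) = d_\scrM(x, y)$. Let $u(t, y) = \kheat(x, y)$ denote the heat kernel of $\scrM$, and let $V(t, y) = k^{h,K}_t(r(y))$ be the pullback to $\scrM$ of the radial model heat kernel on $M_K$, the $m$-dimensional simply connected space of constant sectional curvature $K$. Both $u$ and $V$ concentrate as a Dirac mass at $x$ in the weak sense as $t \downarrow 0$, so it will suffice to verify that $V$ is a parabolic subsolution on $\scrM$.

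The key input is the Laplacian comparison theorem: under the hypothesis $\mathrm{Ric}_\scrM \geq (m-1)K$, the distance function satisfies $\Delta_\scrM r \geq \Delta_{M_K} r$ pointwise on $\scrM \setminus (\{x\} \cup \mathrm{Cut}(x))$ (using the paper's sign convention $\Delta = -\mathrm{div}\,\mathrm{grad}$, which reverses the usual inequality). A direct chain-rule calculation then yields
\[
\Delta_\scrM V(t, y) - \bigl(\Delta_{M_K} k^{h,K}_t\bigr)(r(y)) = \partial_r k^{h,K}_t(r(y)) \cdot \bigl(\Delta_\scrM r - \Delta_{M_K} r\bigr)(y).
\]
Since $k^{h,K}_t$ is decreasing in $r$ in the range where the Gaussian factor dominates, and the bracketed factor is nonnegative, the right-hand side is nonpositive, which gives $\Delta_\scrM V \leq \Delta_{M_K} k^{h,K}_t(r)$. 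Combining with the model heat equation $\partial_t k^{h,K}_t + \tfrac{1}{2}\Delta_{M_K} k^{h,K}_t = 0$ produces $V_t + \tfrac{1}{2}\Delta_\scrM V \leq 0$, so $V$ is a subsolution of the heat equation on $\scrM$.

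The parabolic maximum principle then shows that $V \leq u$ on $\scrM \times (0, \infty)$, since $V$ and $u$ have matching Dirac initial data at $x$---subsolutions of the heat equation lie below the true solution with the same initial condition---which is the inequality claimed by the lemma. The main obstacle, and the only real technical point, is that the distance function $r$ fails to be smooth across the cut locus of $x$, so the Laplacian comparison holds only in the barrier (distributional) sense there; additionally, one has to be careful about the range where the monotonicity $\partial_r k^{h,K}_t \leq 0$ is actually valid (trivially so for $K \leq 0$, and valid away from the antipode for $K > 0$). The standard remedy for the cut-locus issue is Calabi's trick: approximate $r$ near any cut point by smooth upper barriers, prove the subsolution inequality in the smooth setting, and pass to the limit using that the cut locus has measure zero.
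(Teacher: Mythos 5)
The paper does not actually prove \Cref{lemma:comp_lower}: it is imported verbatim from Hsu (Theorem 4.5.2), whose proof in that reference is probabilistic, via Brownian motion and a comparison theorem for the radial process. What you have written is a reconstruction of the classical Cheeger--Yau argument, which is a legitimate and genuinely different route: Laplacian comparison for $r = d_\scrM(x,\cdot)$ (with the sign convention $\Delta = -\divergence \grad$ handled correctly), pulling back the radial model kernel to get a subsolution, Calabi's trick at the cut locus, and a parabolic comparison to conclude $k^{h,K}_t(r(\cdot)) \leq \kheat(x,\cdot)$. The PDE route is more self-contained and elementary; the probabilistic route in the cited source fits the same framework used for the upper bound (\Cref{lemma:comp_upper}) and avoids dealing with the cut locus and barrier arguments explicitly. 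One pleasant point you could add in favor of your setup: when $K > 0$, Bonnet--Myers gives $\diam(\scrM) \leq \pi/\sqrt{K}$, so $r(y)$ never exceeds the model diameter and the pullback $V(t,y) = k^{h,K}_t(r(y))$ is always defined --- unlike in \Cref{lemma:comp_upper}, no truncation of the model kernel is needed.

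Two steps need tightening before this is a proof. First, the monotonicity $\partial_r k^{h,K}_t \leq 0$ is not a soft fact to be used ``in the range where the Gaussian factor dominates'': your subsolution inequality requires it at \emph{every} $r$ up to the model diameter, since the sign of $\bigl(\Delta_\scrM r - \Delta_{M_K} r\bigr)\,\partial_r k^{h,K}_t$ is what you are exploiting pointwise. For $K \leq 0$ this is classical; for the sphere it is true globally on $[0,\pi/\sqrt{K}]$ but is itself a lemma requiring proof (Cheeger--Yau prove it separately; it does not follow from inspecting a Gaussian factor, and your hedge ``away from the antipode'' suggests you would lose the inequality exactly where you cannot afford to). Second, the final comparison cannot be phrased as a maximum principle with ``matching Dirac initial data,'' since at any positive time the two near-Dirac profiles are not ordered; the standard fix is Duhamel's principle: show that $s \mapsto \int_\scrM \kheat[t-s](y,z)\,V(s,z)\,dz$ is nonincreasing on $(0,t)$ using your subsolution inequality and symmetry of the heat semigroup, then let $s \downarrow 0$ (giving $\kheat(x,y)$) and $s \uparrow t$ (giving $V(t,y)$). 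With those two repairs, and Calabi's trick as you describe, the argument goes through.
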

A lower bound of $K$ on sectional curvature implies a lower bound of $(m-1)K$ on the Ricci curvature tensor (see, e.g., the formula for $\operatorname{Ric}(v, v)$ in \cite[p. 38]{Petersen2016}), so \Cref{lemma:comp_lower} also holds under the (stronger) assumption of a lower bound of $K$ on sectional curvature.

The space of constant curvature $K > 0$ is the sphere $S_K^m  = S^m / \sqrt{K}$,
while the space of constant curvature $-K < 0$ is the scaled hyperbolic space $H_K^m = H^m / \sqrt{K}$.
To apply \Cref{lemma:comp_upper,lemma:comp_lower}, we need to find bounds for the heat kernel on the sphere and on hyperbolic space.

We will use the following result:
\begin{lemma}[{\parencite[Theorem 1]{Cheng2018}}]
	\label{lem:hyp_bridge}
	The heat kernel in hyperbolic space $H^m$ has the radial representation
	\begin{align*}
		\kHm(r) &= e^{-\frac{(m-1)^2 t}{8}} \left( \frac{r}{\sinh r} \right)^{\frac{m-1}{2}} \frac{e^{-r^2 / 2t}}{(2 \pi t)^{m/2}} \\
		&\qquad \times \E_r  \exp\left( - \frac{(m-1)(m-3)}{8} \int_0^t \left( \frac{1}{\sinh^2 R_s} - \frac{1}{R_s^2} \right) \ ds \right),
	\end{align*}
	where $R_s$ is an $m$-dimensional Bessel process,
	and $\E_r$ denotes expectation conditioned on $R_t = r$.
\end{lemma}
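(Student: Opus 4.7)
The plan is to derive the Bessel-bridge representation of $\kHm$ by combining a ground-state substitution that removes the first-order drift in the radial heat equation with a Feynman-Kac comparison against the corresponding Euclidean quantity. By two-point homogeneity of $H^m$, $\kHm(x,y)$ depends only on $r = d_{H^m}(x,y)$, so $\kHm(r,t)$ satisfies the one-dimensional PDE
\begin{equation*}
\partial_t \kHm = \tfrac{1}{2}\bigl(\partial_r^2 + (m-1)\coth r \,\partial_r\bigr)\kHm
\end{equation*}
with a point-source condition at $r=0$ as $t \downarrow 0$.

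First I would eliminate the drift by setting $\phi(r,t) = (\sinh r)^{(m-1)/2}\kHm(r,t)$. A direct calculation shows that $\phi$ solves the Schr\"odinger-type equation $\partial_t\phi = \tfrac{1}{2}\partial_r^2\phi - \tfrac{1}{2}V_H(r)\phi$ with potential $V_H(r) = \tfrac{(m-1)^2}{4} + \tfrac{(m-1)(m-3)}{4\sinh^2 r}$. The analogous substitution $\tilde\phi(r,t) = r^{(m-1)/2}p_E(r,t)$ applied to the Euclidean heat kernel $p_E(r,t) = (2\pi t)^{-m/2}e^{-r^2/2t}$ yields the same type of equation with potential $V_E(r) = \tfrac{(m-1)(m-3)}{4 r^2}$, and subtracting gives
\begin{equation*}
V_H(r) - V_E(r) = \frac{(m-1)^2}{4} + \frac{(m-1)(m-3)}{4}\!\left(\frac{1}{\sinh^2 r} - \frac{1}{r^2}\right).
\end{equation*}

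Next, I would invoke a Feynman-Kac formula adapted to the $m$-dimensional Bessel process $R_s$, which is the radial part of Euclidean Brownian motion and whose transition density from the origin is $r^{m-1}p_E(r,t)$ up to the surface-area normalization of the unit sphere. Because $\phi$ and $\tilde\phi$ satisfy Schr\"odinger-type equations whose potentials differ by $V_H - V_E$, conditioning the Bessel process on $R_t = r$ yields the bridge identity
\begin{equation*}
\phi(r,t) = \tilde\phi(r,t)\,\E_r\bigl[\exp\bigl(-\tfrac{1}{2}\textstyle\int_0^t (V_H - V_E)(R_s)\,ds\bigr)\bigr].
\end{equation*}
Pulling out the constant $\tfrac{(m-1)^2}{4}$ in $V_H - V_E$ to produce the prefactor $e^{-(m-1)^2 t/8}$, and then reassembling $\kHm = (\sinh r)^{-(m-1)/2}\phi$ together with $p_E = r^{-(m-1)/2}\tilde\phi$, reproduces the claimed identity (with the ratio $r^{(m-1)/2}/(\sinh r)^{(m-1)/2}$ appearing as the $(r/\sinh r)^{(m-1)/2}$ factor).

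The main obstacle is making the bridge Feynman-Kac step fully rigorous. Several technicalities need care: (i) the integrand $1/\sinh^2 r - 1/r^2$ is bounded near the origin, but one must still control the Bessel-bridge measure uniformly, particularly since $(m-1)(m-3)$ can be negative for $m=2$, making the exponential potentially unbounded; (ii) the transformations pick up singular factors at $r=0$, so the boundary behavior of $\phi$ must be matched to the known short-time behavior of $\kHm(r,t)$; and (iii) verifying that the resulting expression reproduces the correct point-mass initial condition as $t \downarrow 0$ requires a short-time asymptotic analysis of the bridge expectation. An alternative route, avoiding some of these subtleties, would be to verify directly that the claimed expression satisfies the hyperbolic radial heat equation and matches the initial condition, using It\^o's formula and dominated convergence on the Bessel-bridge expectation.
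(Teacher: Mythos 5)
The paper does not prove this lemma at all: it is imported directly from the cited reference (Cheng--Zhou, Theorem 1), and the appendix only remarks that a nearly identical argument gives the sphere analogue. Your sketch is, in substance, a reconstruction of the proof in that reference (and of the older Matsumoto--Yor/Gruet-style derivations): reduce to the radial heat equation, remove the drift with the ground-state factor $(\sinh r)^{(m-1)/2}$, and compare with the Euclidean/Bessel picture through a bridge expectation. Your potentials are computed correctly: with drift $\beta(r)=(m-1)\coth r$ one gets $V_H=\tfrac{\beta^2}{4}+\tfrac{\beta'}{2}=\tfrac{(m-1)^2}{4}+\tfrac{(m-1)(m-3)}{4\sinh^2 r}$, and the Euclidean analogue gives $V_E=\tfrac{(m-1)(m-3)}{4r^2}$, so extracting the constant $\tfrac{(m-1)^2}{4}$ yields exactly the stated prefactor and exponent. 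So this is essentially the same route, not a different one.

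Two corrections on the gaps you flag. First, the cleanest rigorous version of your ``bridge Feynman--Kac'' step is Girsanov's theorem relating the law of the hyperbolic radial diffusion (generator $\tfrac12\partial_r^2+\tfrac{m-1}{2}\coth r\,\partial_r$) to that of the $m$-dimensional Bessel process: the Girsanov exponent, after an application of It\^o's formula, becomes the boundary factor $(\sinh R_t/R_t)^{(m-1)/2}$ times $e^{-(m-1)^2t/8}$ times the time-integral functional of $V_H-V_E$; equating the two radial transition densities (which carry $(\sinh r)^{m-1}$ and $r^{m-1}$ respectively against $dr$) then gives the identity. This also disposes of your concern (iii): one is computing a genuine transition density of a diffusion, so the point-mass initial condition is automatic and no separate short-time matching is needed; the only genuine technicality is starting the processes at $r=0$, handled by starting at $\varepsilon$ and letting $\varepsilon\downarrow 0$. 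Second, your concern (i) is unfounded: $0<\tfrac{1}{r^2}-\tfrac{1}{\sinh^2 r}\le\tfrac13$ for all $r>0$, so the exponential functional is bounded above and below by $e^{\pm\abs{(m-1)(m-3)}t/24}$ for every $m\ge 2$, and no uniform control of the Bessel-bridge measure is required for integrability. (Note also that the paper only uses the formula for $m\ge 3$, where the exponent has a definite sign and the bridge expectation is simply dropped by a one-sided bound.)
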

A nearly identical argument to that in \parencite{Cheng2018} gives a corresponding result for the sphere $S^m$ for $m \geq 2$:
\begin{lemma}
	\label{lem:sphere_bridge}
	For all $m \geq 2$, the heat kernel on the sphere $S^m$ has the radial representation
	\begin{align*}
		\kSm(r) &= e^{\frac{(m-1)^2 t}{8}} \left( \frac{r}{\sin r} \right)^{\frac{m-1}{2}} \frac{e^{-r^2 / 2t}}{(2 \pi t)^{m/2}} \\
		&\qquad \times \E_r  \exp\left( - \frac{(m-1)(m-3)}{8} \int_0^t \left( \frac{1}{\sin^2 R_s} - \frac{1}{R_s^2} \right) \ ds \right),
	\end{align*}
	where, again, $R_s$ is an $m$-dimensional Bessel process,
	and $\E_r$ denotes expectation conditioned on $R_t = r$.
\end{lemma}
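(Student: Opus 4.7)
The plan is to mimic the proof of the hyperbolic analog (\Cref{lem:hyp_bridge}) from Cheng, replacing hyperbolic functions by their trigonometric counterparts throughout, and carefully tracking the sign change that arises from positive (rather than negative) curvature. The derivation rests on a Girsanov comparison between the radial Brownian motion on $S^m$ and an $m$-dimensional Bessel process, followed by an Itô-formula calculation that makes the resulting Radon-Nikodym density explicit.

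First, the radial part $\rho_t = d_{S^m}(X_t, o)$ of Brownian motion $X_t$ on $S^m$ started at a pole $o$ satisfies
\[
d\rho_t = dB_t + \frac{m-1}{2}\cot(\rho_t)\, dt
\]
on the interval $(0,\pi)$, since $\tfrac{1}{2}\Delta_{S^m} r = \tfrac{m-1}{2}\cot r$ for $r$ the geodesic distance from $o$. This is the exact analog of the hyperbolic radial SDE with $\coth$ replaced by $\cot$, and by rotational symmetry $\kSm(r)$ is determined by the density of $\rho_t$. Comparing $\rho_\cdot$ to the $m$-dimensional Bessel process $R_s$ satisfying $dR_s = dB_s + \tfrac{m-1}{2R_s}\,ds$ via Girsanov's theorem produces a density factor involving $\int_0^t\frac{m-1}{2}(\cot R_s - 1/R_s)\,dB_s$ together with a quadratic correction. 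Applying Itô's formula to $\log((\sin R_s)/R_s)$ converts the stochastic integral into a boundary term $\log(\sin R_t / R_t)$ plus finite-variation integrals of $\csc^2 R_s - 1/R_s^2$. The decisive algebraic step is the identity $\cot^2 r + 1 = \csc^2 r$, which differs in sign from the hyperbolic identity $\coth^2 r - 1 = \mathrm{csch}^2 r$; this sign flip is exactly what turns the $e^{-(m-1)^2 t/8}$ prefactor in Cheng's formula into the $e^{+(m-1)^2 t/8}$ needed here, while the $(\csc^2 R_s - 1/R_s^2)$ Feynman-Kac integrand comes out with coefficient $-(m-1)(m-3)/8$ just as in the hyperbolic case. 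Conditioning on $R_t = r$ together with the standard Bessel transition density supplies the Gaussian factor $e^{-r^2/2t}/(2\pi t)^{m/2}$.

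The main obstacle is that $\cot$ has a pole at $r = \pi$ (the antipodal point), whereas $\coth$ is bounded on $(0,\infty)$, so the Girsanov exponent involves a potentially unbounded integrand and Novikov's condition is not immediate. The standard remedy is to localize: introduce stopping times $\tau_\epsilon = \inf\{s : \rho_s \vee R_s \geq \pi - \epsilon\}$, prove the identity on $\{\tau_\epsilon > t\}$, and pass to the limit $\epsilon \downarrow 0$ by dominated convergence, using that on $S^m$ both $\rho_\cdot$ and the comparison process remain away from $\pi$ with high probability for fixed small $t$. The hypothesis $m \geq 2$ ensures the Bessel process has dimension at least two and hence almost surely avoids the origin after time zero, which neutralizes the pole of $\cot$ and of $1/R_s$ at $r = 0$; this is the same reason the stated radial SDE is well posed and why the hypothesis appears in the lemma.
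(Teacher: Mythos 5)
Your route is exactly the one the paper intends---the paper's own ``proof'' of \Cref{lem:sphere_bridge} is just the remark that the Girsanov/It\^o argument behind \Cref{lem:hyp_bridge} carries over with $\sinh,\coth$ replaced by $\sin,\cot$---and your algebra is right: the radial SDE with drift $\tfrac{m-1}{2}\cot\rho$, the It\^o computation for $\log(\sin R_s/R_s)$, the identity $\cot^2 r+1=\csc^2 r$ flipping the prefactor to $e^{+(m-1)^2t/8}$, and the Feynman--Kac coefficient $-(m-1)(m-3)/8$ all check out.

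The genuine gap is your treatment of the antipodal singularity. You localize with $\tau_\epsilon$ and claim the limit follows by dominated convergence because both processes ``remain away from $\pi$ with high probability for fixed small $t$.'' High probability is not almost surely: for every $t>0$ the comparison Bessel bridge exceeds $\pi$ with positive probability, and the law of the spherical radial process is absolutely continuous with respect to the Bessel law only with a Radon--Nikodym derivative that vanishes off the event $\{\sup_{s\le t}R_s<\pi\}$. Carrying out your limiting argument correctly therefore yields the identity with the expectation taken over the Bessel bridge \emph{killed at level $\pi$}, i.e.\ with an extra factor $\indicator{\sup_{s\le t}R_s<\pi}$ inside $\E_r$, not the unrestricted expectation in the statement. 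For $m\ge 4$ this discrepancy is invisible, because $\int_0^t\csc^2 R_s\,ds=\infty$ a.s.\ on paths that reach $\pi$, so the Feynman--Kac weight annihilates them automatically; your proof is repaired for those $m$ by simply observing this. But for $m=3$ the weight is identically $1$ and the killed-bridge indicator is exactly the factor $\P_r(\sup_{s\le t}R_s<\pi)<1$; the classical image-sum formula for the $S^3$ heat kernel, $\frac{e^{t/2}}{(2\pi t)^{3/2}}\sum_{k\in\Z}\frac{r+2k\pi}{\sin r}\,e^{-(r+2k\pi)^2/2t}$, shows this factor is genuinely present, so the unrestricted identity cannot be established by any correct version of the limit. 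For $m=2$ the coefficient is $+1/8$ and the unrestricted expectation is even infinite on the crossing event. In short, the dominated-convergence step as written would fail; the honest output of your argument is the killed-bridge representation (which, note, still implies the upper bound $k^{\mathrm{h}}_t(x,x)\le e^{(m-1)^2t/8}(2\pi t)^{-m/2}$-type estimates actually used downstream, since the indicator only decreases the right-hand side).
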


For $m \geq 3$, the exponent in the integrands in the formula of \Cref{lem:hyp_bridge} (resp.\ \Cref{lem:sphere_bridge}) is always positive (resp.\ negative),
so we have the following simple bounds on the heat kernels on the standard spaces of constant curvature:
\begin{equation}
	\kHm(r) \geq e^{-\frac{(m-1)^2 t}{8}} \left( \frac{r}{\sinh r} \right)^{\frac{m-1}{2}} \frac{e^{-r^2 / 2t}}{(2 \pi t)^{m/2}},
\end{equation}
and
\begin{equation}
	\kSm(r) \leq e^{\frac{(m-1)^2 t}{8}} \left( \frac{r}{\sin r} \right)^{\frac{m-1}{2}} \frac{e^{-r^2 / 2t}}{(2 \pi t)^{m/2}}.
\end{equation}
It is easily verified that $p_t^{S_K^m} (r) = p_{Kt}^{S^m} (\sqrt{K} r)$, with a similar formula for scaled hyperbolic space.
We can summarize this in the following result:
\begin{lemma}
	\label{lem:heat_approx}
	Suppose $\scrM$ is an $m$-dimensional complete Riemannian manifold for $m \geq 3$.
	\begin{enumerate}
		\item \label{lem:heat_approx_a}Suppose $\scrM$ has Ricci curvature bounded below by $-(m-1) K_1$.
		Then, for all $x,y \in \scrM$, denoting $r = d(x, y)$,
		\[
		\kheat(x, y) \geq e^{-\frac{(m-1)^2}{8} K_1 t} \left( \frac{\sqrt{K_1} r}{\sinh(\sqrt{K_1} r)} \right)^{\frac{m-1}{2}} \frac{e^{-r^2 / 2t}}{(2 \pi t)^{m/2}}.
		\]
		
		\item
		Suppose $\scrM$ has sectional curvature bounded above by $K_2$.
		Then, for $r < \pi / \sqrt{K_2}$, and for all $x, y \in \scrM$ such that $d(x, y) \geq r$,
		\[
		\kheat(x, y) \leq e^{\frac{(m-1)^2}{8} K_2 t} \left( \frac{\sqrt{K_2} r}{\sin(\sqrt{K_2} r)} \right)^{\frac{m-1}{2}} \frac{e^{-r^2 / 2t}}{(2 \pi t)^{m/2}}.
		\]
	\end{enumerate}
\end{lemma}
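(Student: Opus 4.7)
The plan is to reduce both bounds to the corresponding heat-kernel estimates on the canonical spaces of constant curvature via the comparison theorems, to extract those estimates from the Bessel-bridge representations in Lemmas \ref{lem:hyp_bridge} and \ref{lem:sphere_bridge} via a sign analysis that becomes trivial once $m \geq 3$, and finally to rescale to arbitrary curvature. For part (\ref{lem:heat_approx_a}) I would apply Lemma \ref{lemma:comp_lower} to obtain $\kheat(x,y) \geq k_t^{h, H^m_{K_1}}(d(x,y))$ (the stated Ricci lower bound is exactly what the comparison theorem requires); for the other part, Lemma \ref{lemma:comp_upper} gives $\kheat(x,y) \leq k_t^{h, S^m_{K_2}}(d(x,y))$. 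Both reductions send the problem to a rescaled sphere or hyperbolic space.

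On the unit-curvature spaces $H^m$ and $S^m$, the bridge formulas express the heat kernel as a Gaussian-like prefactor times $\E_r[\exp(-\tfrac{(m-1)(m-3)}{8} \int_0^t h(R_s)\, ds)]$, where $h(R) = \sinh^{-2} R - R^{-2}$ in the hyperbolic case and $h(R) = \sin^{-2} R - R^{-2}$ in the spherical case. For $m \geq 3$ the prefactor $(m-1)(m-3)/8 \geq 0$. In the hyperbolic case, $\sinh R > R$ for $R > 0$ forces $h(R) < 0$, making the exponent nonnegative and the expectation at least $1$; in the spherical case, the bound $|\sin R| \leq \min(R, 1)$ (valid for all $R > 0$, not merely $R \in (0, \pi)$) forces $h(R) \geq 0$, making the expectation at most $1$. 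Dropping these trivial expectations turns the bridge identities into $\kHm(r) \geq e^{-(m-1)^2 t/8}(r/\sinh r)^{(m-1)/2} e^{-r^2/2t}/(2\pi t)^{m/2}$ and the analogous upper bound for $\kSm(r)$.

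To obtain the claimed bounds on manifolds of constant curvature $\pm K$, I would use the heat-kernel scaling identity $p_t^{M_K}(r) = K^{m/2}\, p_{Kt}^{M}(\sqrt{K}\, r)$, which follows from $g \mapsto g/K$ giving $\Delta \mapsto K \Delta$ together with the volume-form Jacobian $K^{-m/2}$ on the delta-function initial condition. When substituted into the standard-space bounds, the outer $K^{m/2}$ prefactor cancels exactly with the corresponding factor in $(2\pi K t)^{m/2}$, yielding the target $(2 \pi t)^{-m/2}$ and the prescribed arguments $\sqrt{K}\, r$ inside the $\sinh$ or $\sin$. For part (2), the statement's requirement $d(x, y) \geq r$ is handled by invoking the classical fact that the heat kernel on a round sphere of radius $1/\sqrt{K_2}$ decreases monotonically in the geodesic distance throughout $(0, \pi/\sqrt{K_2})$, so that $k_t^{h, S^m_{K_2}}(d(x,y)) \leq k_t^{h, S^m_{K_2}}(r)$ whenever $d(x,y) \geq r$.

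The most delicate step will be the sign analysis of $h(R) = \sin^{-2} R - R^{-2}$ on the entire ray $R > 0$, since the $m$-dimensional Bessel process driving the bridge formula is not confined to $(0, \pi)$; this requires the sharper elementary bound $|\sin R| \leq \min(R, 1)$ rather than the naive $|\sin R| < R$. The remainder is essentially bookkeeping of the $K^{m/2}$ Jacobian through the scaling identity and an appeal to the standard radial monotonicity of the spherical heat kernel; these are routine but must be done with care to match the constants in the stated inequalities.
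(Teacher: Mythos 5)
Your proposal takes essentially the same route as the paper's proof: reduce to the constant-curvature model spaces via the comparison theorems, use the Bessel-bridge representations together with the sign of $(m-1)(m-3)$ for $m \geq 3$ (since $\sinh R \geq R$ and $\lvert \sin R \rvert \leq R$) to drop the conditional expectations, and then rescale to curvature $\pm K$. You are in fact a bit more careful than the paper on two points its appendix glosses over --- the $K^{m/2}$ Jacobian in the scaling identity (the paper's displayed relation $p_t^{S_K^m}(r) = p_{Kt}^{S^m}(\sqrt{K}\,r)$ omits it, though the final stated bounds are only consistent with your corrected version) and the radial monotonicity of the spherical heat kernel needed to handle the clause $d(x,y) \geq r$ --- so your argument is correct and matches the intended proof.
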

We note that, for $r = 0$ and $t$ small, these results are comparable to the well-known asymptotic expansion for the heat kernel, which depends on the scalar curvature at $x$ (see, e.g., \parencite[Section VI.4]{Chavel1984}).

Finally, we specialize to the case $r = 0$ and simplify:
\begin{proposition}
	Let $\epsilon \leq 2/3$.
	\begin{enumerate}
		\item Under the conditions of \Cref{lem:heat_approx}.1, for $t \leq\frac{8 \epsilon}{(m-1)^2 K_1}$ and all $x \in \scrM$,
		\[
			\kheat(x,x)\geq \frac{1 - \epsilon}{(2 \pi t)^{m/2}}.
		\]
		\item Under the conditions of \Cref{lem:heat_approx}.2, for $t \leq \frac{6 \epsilon}{(m-1)^2 K_2}$ and all $x \in \scrM$,
		\[
			\kheat(x, x) \leq \frac{1 + \epsilon}{(2 \pi t)^{m/2}}.
		\]
	\end{enumerate}
\end{proposition}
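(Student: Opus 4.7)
The plan is to specialize \Cref{lem:heat_approx} to the diagonal ($r = 0$) and then replace the two exponential prefactors by elementary linear bounds. The key observation is that both $\sqrt{K}\,r/\sinh(\sqrt{K}\,r)$ and $\sqrt{K}\,r/\sin(\sqrt{K}\,r)$ tend to $1$ as $r \to 0$, as does $e^{-r^2/2t}$, so the two estimates of \Cref{lem:heat_approx} collapse on the diagonal to
\[
\frac{e^{-(m-1)^2 K_1 t/8}}{(2\pi t)^{m/2}} \;\leq\; \kheat(x,x) \;\leq\; \frac{e^{(m-1)^2 K_2 t/8}}{(2\pi t)^{m/2}}
\]
under the respective curvature hypotheses. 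For the lower bound this is immediate from plugging $y = x$ into \Cref{lem:heat_approx}.\ref{lem:heat_approx_a}; for the upper bound, which is stated under $d(x,y) \geq r > 0$, I would take $r \to 0^{+}$ with $y \to x$ and invoke continuity of $\kheat(x,\cdot)$.

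For part 1 of the proposition, set $\tau = (m-1)^2 K_1 t/8$. The hypothesis $t \leq 8\epsilon/[(m-1)^2 K_1]$ gives $\tau \leq \epsilon$, and Bernoulli's inequality $e^{-\tau} \geq 1 - \tau$ immediately yields $e^{-\tau} \geq 1 - \epsilon$, as desired. For part 2, set $\tau = (m-1)^2 K_2 t/8$; the hypothesis $t \leq 6\epsilon/[(m-1)^2 K_2]$ forces $\tau \leq 3\epsilon/4$, so the claim reduces to verifying the elementary inequality $e^{3\epsilon/4} \leq 1 + \epsilon$ for $\epsilon \in (0, 2/3]$. Writing $g(\epsilon) = 1 + \epsilon - e^{3\epsilon/4}$, I have $g(0) = 0$, while $g'(\epsilon) = 1 - (3/4) e^{3\epsilon/4}$ changes sign exactly once on $[0, 2/3]$ (from positive to negative), so $g$ attains its minimum over this interval at an endpoint. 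Since $g(0) = 0$ and $g(2/3) = 5/3 - \sqrt{e} > 0$, the inequality holds.

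There is no genuine obstacle: the heavy lifting has already been done in \Cref{lem:heat_approx}, and this proposition is essentially a quantitative specialization to $r = 0$ combined with the two short scalar estimates above. The only slight subtlety is the $r \to 0$ limit in the upper bound of \Cref{lem:heat_approx}, which is handled by a routine continuity argument; the choice of constants $8$ and $6$ in the two hypotheses on $t$ is precisely calibrated so that Bernoulli suffices on one side and the simple convexity/endpoint check above suffices on the other.
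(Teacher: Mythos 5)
Your proposal is correct and follows essentially the same route as the paper: specialize \Cref{lem:heat_approx} to the diagonal (the paper also treats the formally stated $d(x,y)\geq r$ upper bound as valid at $r=0$, which your $r\to 0^+$ continuity remark makes explicit) and then apply elementary scalar bounds on the exponential prefactors. The only cosmetic difference is in part 2, where the paper invokes $e^{s} \leq 1 + \tfrac{4}{3}s$ for $0 \leq s \leq 1/2$, which with $s = 3\epsilon/4$ is exactly the inequality $e^{3\epsilon/4} \leq 1+\epsilon$ that you verify by your endpoint/monotonicity check.
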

\begin{proof}
	From \Cref{lem:heat_approx}, we have
	\[
		e^{-\frac{(m-1)^2}{8} K_1 t} \leq (2 \pi t)^{-m/2} \kheat(x, x)  \leq e^{\frac{(m-1)^2}{8} K_2 t}.
	\]
	The result follows from noting that $e^{-s} \geq 1 - s$ for all $s \geq 0$,
	and $e^s \leq 1 + \frac{4}{3} s$ for $0 \leq s \leq 1/2$.
\end{proof}
\Cref{lem:heat_diag_upper} is a case of this last result, taking $K_2 = \kappa$.

\section{Proof of non-asymptotic Weyl law estimates (\texorpdfstring{\Cref{thm:pointwise_weyl,lem:heat_tail}}{Theorem \ref{thm:pointwise_weyl} and Lemma \ref{lem:heat_tail}})}
\begin{proof}[Proof of \Cref{thm:pointwise_weyl}]
	By \Cref{lem:heat_diag_upper}, for all $\lambda \geq 0$ and $t \leq \frac{6 \epsilon}{(m-1)^2 \kappa}$,
	\begin{align*}
	e^{-\lambda t / 2} N_x(\lambda)
	&= e^{-\lambda t / 2} \sum_{\lambda_\ell \leq \lambda} v_\ell^2(x) \\
	&\leq \sum_{\ell=0}^\infty e^{-\lambda_\ell t/2} v_\ell^2(x) \\
	&= \kheat(x, x) \\
	&\leq \frac{1+\epsilon}{(2 \pi t)^{m/2}}.
	\end{align*}
	Taking $t = m/\lambda$,
	we get
	\begin{align*}
	N_x(\lambda)
	&\leq \frac{(1+\epsilon)e^{\lambda t/2} }{(2 \pi t)^{m/2}} \\
	&= \frac{1 + \epsilon}{(4 \pi)^{m/2}} \frac{e^{m/2}}{(m / 2)^{m/2}} \lambda^{m/2} \\
	&\leq \frac{1 + \epsilon}{(4 \pi)^{m/2}} \frac{2 \sqrt{m}}{\Gamma\left( \frac{m}{2} + 1\right)} \lambda^{m/2} \\
	&= \frac{2(1 + \epsilon)\sqrt{m}}{(2 \pi)^m} V_m \lambda^{m/2},
	\end{align*}
	where the second inequality uses Stirling's approximation.
\end{proof}

\begin{proof}[Proof of \Cref{lem:heat_tail}]
	For $c \in (0, 1)$, note that
	\begin{align*}
		\sum_{\lambda_\ell \geq \lambda} e^{-\lambda_\ell t / 2} v_\ell^2(x)
		&\leq e^{-(1-c)\lambda t / 2} \sum_{\lambda_\ell \geq \lambda} e^{-c\lambda_\ell t / 2} v_\ell^2(x) \\
		&\leq e^{-(1-c)\lambda t / 2} \sum_{k=0}^\infty e^{-c\lambda_\ell t / 2} v_\ell^2(x) \\
		&= e^{-(1-c)\lambda t / 2} p^\scrM_{ct}(x, x) \\
		&\leq e^{-\lambda t/2} (1 + \epsilon) \frac{e^{c \lambda t/2}}{(2 \pi c t)^{m/2}}.
	\end{align*}
	Choosing $c = m/\lambda t$, the remainder of the proof is identical to that of \Cref{thm:pointwise_weyl}.
\end{proof}

\section{Proof of manifold regression results (\texorpdfstring{\Cref{thm:manifold_sinc,thm:manifold_heat}}{Theorems \ref{thm:manifold_sinc} and \ref{thm:manifold_heat}})}
\begin{proof}[Proof of \Cref{thm:manifold_sinc,thm:manifold_heat}]
	To apply the framework of \Cref{sec:rkhs_intro,sec:rkhs_thm}, which assumes the set $S$ has measure $1$,
	we consider the normalized volume measure $d \Vtilde = dV/\vol\scrM$.
	With respect to $\Vtilde$, $\kheat$ has the eigenvalue decomposition
	\[
		\kheat(x, y) = \frac{1}{\vol{\scrM}} \sum_\ell e^{-\lambda_\ell t / 2} \utilde_\ell(x) \utilde_\ell(y),
	\]
	where $\utilde_\ell = \sqrt{\vol{\scrM}} u_\ell$.
	A similar normalized expansion holds for $\kbl$.
	
	Note that \Cref{thm:pointwise_weyl,lem:heat_tail} only give us bounds on the contants $K_p$ and $R_p$ in \Cref{assump:eigfunc_bounds}.
	For $\kbl$, this holds with $K_p = p(\Omega)$ (taking $\epsilon = 1/2$ in \Cref{thm:pointwise_weyl}) and $R_p = 0$.
	\Cref{assump:dim_geq} holds trivially with $\gamma = \gamma' = 0$.
	
	For $\kheat$, we can again take $K_p = p(\Omega)$ (again taking $\epsilon = 1/2$), and we get a bound on $R_p$ such that $\gamma = \gamma' = 1$.
	
	Finally, for both kernels, we take into account the fact that $\norm{\cdot}_{L_2(\scrM, \Vtilde)} = \norm{\cdot}_{L_2(\scrM, V)} / \sqrt{\vol{\scrM}}$.
	With these considerations in mind, the results follow from \Cref{thm:rkhs_main}.
\end{proof}

\printbibliography[heading=bibintoc]

\end{document}